\documentclass[11pt]{article}
\usepackage{amsmath}
\usepackage{amsfonts}
\usepackage{amssymb}
\usepackage{mathtools}
\usepackage{amsthm}
\usepackage{multidef}
\usepackage{dsfont}
\usepackage{booktabs}
\usepackage{bm}
\usepackage{mleftright}
\usepackage{makecell}

\usepackage{array}
\usepackage[caption=false,font=normalsize,labelfont=sf,textfont=sf]{subfig}
\usepackage{textcomp}
\usepackage{stfloats}
\usepackage{url}
\usepackage{verbatim}
\usepackage{graphicx}
\usepackage{cite}
\usepackage[letterpaper,margin=0.5in]{geometry}

\usepackage{algorithm}
\usepackage[noend]{algpseudocode}
\algrenewcommand\algorithmicrequire{\textbf{Input}:}
\algrenewcommand\algorithmicensure{\textbf{Output}:}
\algnewcommand{\IfThen}[2]{
  \State \algorithmicif\ #1\ \algorithmicthen\ #2}

\usepackage[hidelinks]{hyperref}
\usepackage{enumitem}

\algdef{SE}[DOWHILE]{Do}{doWhile}{\algorithmicdo}[1]{\algorithmicwhile\ #1}%

\usepackage{colortbl}%

\DeclareMathOperator*{\Ran}{\mathrm{Ran}}
\DeclareMathOperator*{\Rank}{\mathrm{Rank}}

\usepackage{tikz}
\usepackage{pgfplots}
\usepackage[Tol,OkabeIto,pgf]{colorblind}
\usetikzlibrary{arrows.meta, positioning, calc, fit}
\usetikzlibrary{plotmarks}
\usetikzlibrary{spy}
\usepgfplotslibrary{groupplots}
\usepgfplotslibrary{colorbrewer}
\pgfplotsset{
    compat=newest,
    MyLinePlot/.style={
        line width=0.6pt,
        mark size=1.2pt,
        mark options={solid, line width=0.75pt}
  }
}

\newtheorem{theorem}{Theorem}

\newtheorem{proposition}[theorem]{Proposition}
\newtheorem{corollary}[theorem]{Corollary}

\theoremstyle{definition}
\newtheorem{problem}{Problem}

\theoremstyle{remark}
\newtheorem{remark}{Remark}

\multidef[prefix=set]{\ensuremath{\mathbb{#1}}}{A-Z}

\multidef[prefix=cal]{\ensuremath{\mathcal{#1}}}{A-Z}

\multidef[prefix=frak]{\ensuremath{\mathfrak{#1}}}{A-Z}

\multidef[prefix=v]{\ensuremath{\mathbf{#1}}}{a-z}

\multidef[prefix=m]{\ensuremath{\mathbf{#1}}}{A-Z}
\newcommand{\mDelta}{\mathbf{\Delta}}
\newcommand{\mLambda}{\mathbf{\Lambda}}

\newcommand{\proposed}{\textrm{G-REST}}
\date{}

\begin{document}

\title{Subspace Projection Methods for Fast Spectral \\Embeddings of Evolving Graphs}

\author{Mohammad Eini, Abdullah Karaaslanli, Vassilis~Kalantzis
        and~Panagiotis~A.~Traganitis
\thanks{M. Eini, A. Karaaslanli, and P.~A.~Traganitis are with the Department of Electrical and Computer Engineering, Michigan State University, East Lansing, MI, 48824, USA. E-mails: \{einimoha,karaasl1,traganit\}@msu.edu.
V.~Kalantzis is with IBM Research. E-mail: vkal@ibm.com.
The work of M. Eini, A. Karaaslanli, and P.~A.~Traganitis is supported by NSF award 2312546.
This work was supported in part through computational resources and services provided by the Institute for Cyber-Enabled Research at Michigan State University.}}
\markboth{Journal of \LaTeX\ Class Files,~Vol.~14, No.~8, August~2021}%
{Shell \MakeLowercase{\textit{et al.}}: A Sample Article Using IEEEtran.cls for IEEE Journals}


\maketitle

\begin{abstract}
Several graph data mining, signal processing, and machine learning downstream tasks rely on information related to the eigenvectors of the associated adjacency or Laplacian matrix. Classical eigendecomposition methods are powerful when the matrix remains static but cannot be applied to 
problems where the matrix entries are updated or the number of rows and columns increases frequently. Such 
scenarios occur routinely in graph analytics when the graph is changing dynamically and either 
edges and/or nodes are being added and removed. This paper puts forth a new algorithmic framework 
to update the eigenvectors associated with the leading eigenvalues of an initial adjacency or Laplacian matrix as the  graph evolves dynamically. The proposed algorithm is based on Rayleigh-Ritz projections, in which the original eigenvalue problem is projected onto a restricted subspace which 
ideally encapsulates the invariant subspace associated with the sought eigenvectors. Following ideas from eigenvector perturbation analysis, we present a new methodology to build the projection subspace. 
The proposed framework features lower computational and memory complexity with respect to competitive alternatives while empirical results show strong qualitative performance, both in terms of eigenvector approximation and accuracy of downstream learning tasks of central node identification and node clustering.
\end{abstract}

\section{Introduction}

Graphs are ubiquitous in a range of signal processing and machine learning tasks as they provide a natural framework for representing data defined on irregular, non-Euclidean domains by encoding 
pairwise relationships and interactions among entities. Graphs may arise naturally, such as when 
modeling communication, social, power, and sensor networks, or may be artificially constructed to 
capture similarities or relationships between components of systems or data. To enable processing and inference over graph data, a multitude of algorithms and methodologies have emerged from the signal processing, data mining, and machine learning communities, spanning approaches from statistical techniques to modern deep learning frameworks~\cite{geometric_DL2017,gsp_spm_2023}. A plethora of such methods rely on so-called node embeddings, which are vector representations of the entities in a graph. Such embeddings encode topological information of the graph and enable signal processing and machine learning algorithms, designed for vector data, to be applied to graph data, facilitating tasks such as node clustering, ranking, visualization, and prediction~\cite{network_representation_survey_2020,Chen_Wang_Wang_Kuo_2020,cai2018comprehensive,xu2021understanding,kollias2022directed}.

\emph{Spectral embeddings} or \emph{eigenembeddings}, i.e., embeddings that rely on the eigendecomposition of a graph operator, play a prominent role. Eigenvalues and eigenvectors of the adjacency matrix of a graph are associated with fundamental topological graph properties, such as connectivity, edge density, conductance~\cite{farkas_spectra_2001}. Adjacency-based spectral embeddings underpin applications such as \emph{node or vertex centrality}~\cite{benzi2013ranking,kang2011centralities}, including the notable PageRank algorithm~\cite{page1999pagerank}, and subgraph  centralities~\cite{estrada2005subgraph}, \emph{clustering} \cite{spectralclustering,vatrapu2016social}  and \emph{visualization and positioning in dot-product graphs}~\cite{jmlr_randomdotproduct_2018,gd_based_spectral_embeddings_TSP24,marenco2025weightedrandomdotproduct} among others. 
Accordingly, eigenvalues and eigenvectors of the Laplacian matrix of a graph are related to a multitude of applications such as community detection and clustering of graphs~\cite{spectralclustering}, random walks on graphs~\cite{lovasz1993random}, and graph filtering~\cite{gsp_spm_2023,isufi_tsp_graphfilters_2024} to name a few. As these eigenembeddings rely on the eigendecomposition of the adjacency or the Laplacian of a graph, they can be easily and efficiently computed using contemporary numerical linear algebra tools, especially when the graph operator is sparse and only a few eigenvectors are required.

Nevertheless, oftentimes the graph of interest may change or evolve over time~\cite{evolutionary_network_analysis}. For instance, in social networks new connections may be formed while existing ones may vanish, and new users may join the network, introducing additional nodes and links \cite{community_dynamic_networks2018}. Communication and sensor networks undergo topological changes, as sensors or communication nodes may disconnect or connect from the network, due to mobility, energy constraints or failures. In neuroscience, the temporal structure of brain connectivity networks can indicate dynamic interactions between different brain regions across different tasks~\cite{Bullmore2009-ag}. 

In such temporal or evolving graph scenarios, it is desirable to track key graph properties or perform learning via the embeddings of the changing graph~\cite{dynamic_network_embedding_survey_2022}. However, recomputing embeddings from scratch whenever the graph changes can incur prohibitive computational complexity, especially if the changes occur frequently. 
Thus, methods that efficiently track the embeddings of evolving graphs, foregoing the need for full computation at every time step, are well motivated. Focusing on the eigenvector embeddings of a graph, this work puts forth a novel algorithm, based on the Rayleigh-Ritz approximation scheme, that can readily track the leading eigenpairs of the adjacency matrix of an evolving graph. Specifically, the contributions of this work are as follows:
\begin{itemize}
\item Characterization of perturbation-based eigenpair tracking algorithms from a subspace- and projection-based perspective;
\item Development of a highly accurate and efficient method to track the dominant eigenvalues and eigenvectors of evolving graphs, capable of handling node additions and changes in graph connectivity;
\item Extension of the proposed algorithm to the tracking of eigenpairs of Laplacian matrices;
\item Extensive numerical tests with synthetic and real graphs, evaluating eigenvector estimation quality and performance on two important downstream tasks.
\end{itemize}

The remainder of the paper is organized as follows: Sec.~\ref{sec:problem_statement_prelims} outlines the problem statement and prior art; Sec.~\ref{sec:projections_main} describes the proposed method for updating the eigenembeddings of the graph adjacency for a single update, while Sec.~\ref{sec:main_graph_updates} extends the results of Sec.~\ref{sec:projections_main} to multiple graph updates and to the case of Laplacian matrices. Section~\ref{sec:numerical-tests} evaluates the performance of the proposed algorithm via extensive numerical tests on real and synthetic graphs. Finally, concluding remarks and future research directions are given in Section \ref{sec:conclusion}.

\subsection{Notation} 
Unless otherwise noted, lowercase bold letters, $\mathbf{x}$, denote vectors, uppercase bold letters, $\mathbf{X}$, represent matrices, and calligraphic uppercase letters, $\mathcal{X}$, stand for sets. The $(i,j)$th entry, $i$th row and $i$th column of matrix $\mathbf{X}$ are denoted by $[\mathbf{X}]_{ij}$, $[\mX]_{i\cdot}$ and $[\mX]_{\cdot i}$, respectively. $\Ran(\mathbf{X})$ and $\Rank(\mathbf{X})$ denote the range-space and rank of $\mathbf{X}$, respectively, and $|\mathcal{X}|$ denotes the cardinality of set ${\cal X}$. The difference between two sets $\cal A$ and $\cal B$ is denoted as $\cal A \setminus B$.  ‘$\oplus$' denotes the direct sum between two subspaces, and $\mathbf{I}$ and $\mathbf{0}$ the identity and all-zero matrices of appropriate dimension, respectively. When the dimension of $\mI$ and $\mathbf{0}$ is not clear from context, they are respectively typed as $\mI_N$ and $\mathbf{0}_{N,M}$ to indicate their dimensions. 
Finally, ${\rm nnz}(\mathbf{X})$ denotes the number of non-zero entries of the matrix $\mathbf{X}.$

\section{Problem statement and preliminaries}
\label{sec:problem_statement_prelims}

An undirected graph $\calG$ is a mathematical object consisting of the tuple $\calG = (\calV, \calE)$ where $\calV$ denotes the set of $|\calV| = N$ nodes and  $\calE \coloneqq \{(i,j) | (i,j)\in{\cal V}\times {\cal V}, i \neq j\}$ is the set of edges whose elements represent the links between nodes of $\calG$. The connectivity of $\calG$ can be encoded in a symmetric adjacency matrix $\mA \in \setR^{N \times N}$ where $[\mA]_{ij} = [\mA]_{ji} = 1$ if $(i,j) \in \calE$ and $[\mA]_{ij}=[\mA]_{ji}=0$, otherwise. The $i$th eigenvalue and corresponding eigenvector of $\mA$ are denoted as $\lambda_i$ and $\vx_i$, respectively, and the adjacency matrix admits the eigendecomposition $\mA = \mX \mLambda \mX^\top$, where $\mX = [\vx_1,\vx_2,\ldots,\vx_N]$ collects the eigenvectors of $\mA$ as columns, and $\mLambda$ is a diagonal matrix whose entries are the corresponding eigenvalues, i.e., $[\mLambda]_{ii} = \lambda_i$. The eigenvalue $\lambda_i$ and its corresponding eigenvector $\vx_i$ define the $i$th \textit{eigenpair} $(\lambda_i,\mathbf{x}_i)$ of $\mA$. Unless mentioned otherwise, the eigenpairs of any matrix are ordered according to the magnitude of their respective eigenvalues, i.e., $|\lambda_1| \geq |\lambda_2| \geq \dots |\lambda_N|$ in the case of $\mA$.

\begin{figure}[t]
    \centering
    \begin{tikzpicture}[
    font=\footnotesize,
    >=Latex,
    nodeOld/.style={circle, draw, fill=gray!20, inner sep=2pt},
    nodeNew/.style={circle, draw, fill=orange!35, inner sep=2pt},
    edgeKeep/.style={draw=gray!70, line width=0.8pt},
    edgeAdd/.style={draw=green!60!black, line width=1.2pt},
    edgeDel/.style={draw=red!70!black, dashed, line width=1.0pt}
]

\begin{scope}[shift={(0,0)}, scale=0.8]
  \node at (-0.1,2.4) {$\mathcal{G}^{t}$};
  \draw (-1.9,2) -- (1.7,2);

  \node[nodeOld] (a) at (-1.7,0.8) {$1$};
  \node[nodeOld] (b) at (-0.7,1.5) {$2$};
  \node[nodeOld] (c) at (0.6,1.2)  {$3$};
  \node[nodeOld] (d) at (1.5,0.2)  {$4$};
  \node[nodeOld] (e) at (0.4,-0.9) {$5$};
  \node[nodeOld] (f) at (-1.1,-0.8) {$6$};

  \draw[edgeKeep] (a) -- (b);
  \draw[edgeKeep] (b) -- (c);
  \draw[edgeKeep] (c) -- (d);
  \draw[edgeKeep] (d) -- (e);
  \draw[edgeKeep] (e) -- (f);
  \draw[edgeKeep] (f) -- (a);
  \draw[edgeKeep] (b) -- (e);
\end{scope}

\draw[->, line width=1.0pt] (1.6,0.3) -- (3.3,0.3)
  node[midway, above, align=center] {Graph\\Update};

\begin{scope}[shift={(4.8,0)}, scale=0.8]
  \node at (0.35,2.4) {$\mathcal{G}^{t+1}$};
  \draw (-1.9,2) -- (2.6,2);

  \node[nodeOld] (a2) at (-1.7,0.8) {$1$};
  \node[nodeOld] (b2) at (-0.7,1.5) {$2$};
  \node[nodeOld] (c2) at (0.6,1.2)  {$3$};
  \node[nodeOld] (d2) at (1.5,0.2)  {$4$};
  \node[nodeOld] (e2) at (0.4,-0.9) {$5$};
  \node[nodeOld] (f2) at (-1.1,-0.8) {$6$};

  \node[nodeNew] (g2) at (2.2,1.3) {$7$};
  \node[nodeNew] (h2) at (2.4,-0.6) {$8$};

  \draw[edgeKeep] (a2) -- (b2);
  \draw[edgeKeep] (b2) -- (c2);
  \draw[edgeKeep] (c2) -- (d2);
  \draw[edgeKeep] (d2) -- (e2);
  \draw[edgeKeep] (e2) -- (f2);
  \draw[edgeKeep] (f2) -- (a2);

  \draw[edgeDel] (b2) -- (e2);

  \draw[edgeAdd] (a2) -- (c2);
  \draw[edgeAdd] (c2) -- (e2);

  \draw[edgeAdd] (g2) -- (c2);
  \draw[edgeAdd] (g2) -- (d2);
  \draw[edgeAdd] (h2) -- (e2);
  \draw[edgeAdd] (h2) -- (d2);
\end{scope}

\begin{scope}[shift={(-1.3,-3)}, scale=0.8]
    \draw[draw=black!20, rounded corners=2pt, fill=white] (0,0.4) rectangle ++(10,1.6);
    \node[nodeOld, minimum size=8pt] (l1) at (0.5,1.5) {};
    \node[right=2pt of l1] {Existing Node};
    \node[nodeNew, minimum size=8pt] (l2) [right=2.5 of l1] {};
    \node[right=2pt of l2] {New Node};
    \draw[edgeKeep] ($(l1.west) + (0, -0.75)$) -- ($(l1.east) + (0, -0.75)$);
    \node[anchor=west] at ($(l1.east) + (2pt, -0.75)$) {Unchanged Edge};
    \draw[edgeAdd] ($(l2.west) + (0, -0.75)$) -- ($(l2.east) + (0, -0.75)$);
    \node[anchor=west] at ($(l2.east) + (2pt, -0.75)$) {Added Edge};
    \draw[edgeDel] ($2*(l2.west) - (l1.west) + (-0.5, -0.75)$) -- ($2*(l2.east) - (l1.east) + (-0.25, -0.75)$);
    \node[anchor=west] at ($(l2.east) + (l2.west) - (l1.west) + (-0.25, 0) +(2pt, -0.75)$) {Deleted Edge};
\end{scope}

\end{tikzpicture}
    \caption{Graph evolution from timestep $t$ to $t+1$. The updated graph $\mathcal{G}^{(t+1)}$ features edge deletions (dashed), edge additions (bold), and newly introduced vertices. In this example: $[\mathbf{K}^{(t+1)}]_{1,3} = [\mathbf{K}^{(t+1)}]_{3,1} = 1$, $[\mathbf{K}^{(t+1)}]_{3,5} = [\mathbf{K}^{(t+1)}]_{5,3} = 1$, $[\mathbf{K}^{(t+1)}]_{2,5} = [\mathbf{K}^{(t+1)}]_{5,2} = -1$, $[\mathbf{G}^{(t+1)}]_{3,1} = 1$, $[\mathbf{G}^{(t+1)}]_{4,1} = 1$, $[\mathbf{G}^{(t+1)}]_{4,2} = 1$, and $[\mathbf{G}^{(t+1)}]_{5,2} = 1$.}
    \label{fig:graph-evolution}
\end{figure}
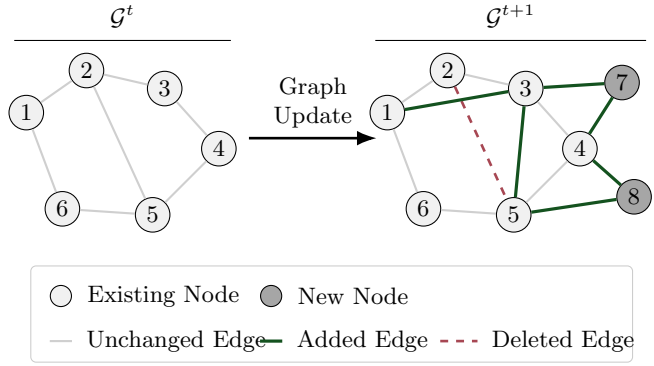

In dynamic or temporal graph settings, a graph can change over time and is observed at discrete-time steps. This results in a sequence of graphs $\{{\cal G}^{(t)}\}_{t=0}^{T}$, with $\calG^{(t)} = (\calV^{(t)}, \calE^{(t)})$ denoting the graph observed at time-step $t$. Similarly to the non-dynamic (static) case, the cardinality of the vertex set ${\cal V}^{(t)}$ is denoted as $|\calV^{(t)}| = N^{(t)}$ while the connectivity of ${\cal G}^{(t)}$ is encoded in an $N^{(t)}\times N^{(t)}$ adjacency matrix $\mathbf{A}^{(t)}$. The $j$th \emph{eigenpair} of $\mathbf{A}^{(t)}$ is denoted as $(\lambda_{j}^{(t)},\mathbf{x}_{j}^{(t)})$. The transition from $\calG^{(t)}$ to $\calG^{(t+1)}$ occurs via either (or a combination) of the following mechanisms:
\begin{itemize}
    \item \textbf{Topological updates:} Additions of new edges and/or removal of existing edges among nodes in $\calV^{(t)}$.
    \item \textbf{Graph expansion:} Introduction of a new set of nodes $\calC^{(t+1)}$ with $|\calC^{(t+1)}| = S^{(t+1)}$, resulting in an expanded vertex set $\calV^{(t+1)} = \calV^{(t)} \cup \calC^{(t+1)}$.
\end{itemize}
Based on this transition model, the $(N^{(t)}+S^{(t+1)}) \times (N^{(t)} + S^{(t+1)})$ adjacency matrix 
$\mA^{(t+1)}$ can be expressed as:
\begin{align}
\label{eq:adj_update}
& \mathbf{A}^{(t+1)} = 
    \overline{\mA}^{(t)} + \mDelta^{(t+1)} \\
&=
    \begin{bmatrix} 
        \mathbf{A}^{(t)} & \mathbf{0}_{N^{(t)},S^{(t+1)}} \\ 
        \mathbf{0}_{S^{(t+1)},N^{(t)}} & \mathbf{0}_{S^{(t+1)},S^{(t+1)}}
    \end{bmatrix} +
    \begin{bmatrix}
        \mathbf{K}^{(t+1)} & \mathbf{G}^{(t+1)} \\ 
        {\mathbf{G}^{(t+1)\top}} & \mathbf{C}^{(t+1)}
    \end{bmatrix}, \notag
\end{align}
where $\overline{\mA}^{(t)}$ denotes the padding of $\mathbf{A}^{(t)}$ by $S^{(t+1)}$ 
identically zero rows and columns and $\mDelta^{(t+1)}$ encodes the connectivity changes that have occurred from $\calG^{(t)}$ to $\calG^{(t+1)}$. In particular, $\mathbf{K}^{(t+1)} \in \{-1,0,1\}^{N^{(t)}\times N^{(t)}}$ is a sparse matrix representing topological updates, i.e., $[\mathbf{K}^{(t+1)}]_{ij} = 1$ $\left([\mathbf{K}^{(t+1)}]_{ij} = -1\right)$ if and only if the previously unconnected (connected) vertices $i$ and $j$ of $\calV^{(t)}$ become connected (disconnected). Similarly, $\mathbf{G}^{(t)} \in \{0,1\}^{N^{(t)}\times S^{(t+1)}}$ encodes the connectivity between newly added nodes and existing nodes, i.e.,   $[\mathbf{G}^{(t+1)}]_{ij}=1$ if and only if the $i$th vertex of ${\calV}^{(t)}$ is connected to the $j$th node of $\calC^{(t+1)}$. Finally, $\mC^{(t+1)} \in \{0,1\}^{S^{(t+1)}\times S^{(t+1)}}$ encodes edges among newly added nodes, i.e., $[\mathbf{C}^{(t+1)}]_{ij}=1$ if and only if the $i$th and $j$th vertices of $\calC^{(t+1)}$ are connected. An example of a graph update $\calG^{(t)}\rightarrow \calG^{(t+1)}$ and corresponding changes in its adjacency matrix is illustrated in
Fig.~\ref{fig:graph-evolution}.

\subsection{Problem statement}
\label{ssec:problem_statement}

The goal of this paper is to \textit{efficiently} update the leading $K$ eigenvalues and 
corresponding eigenvectors of time-evolving graph adjacency matrices. Specifically, 
for $j=1, \dots, K$, the goal is to update the eigenpair \smash{$(\lambda_{j}^{(t)},\mathbf{x}_{j}^{(t)})$} 
of $\mathbf{A}^{(t)}$ to the corresponding unknown eigenpair $(\lambda_{j}^{(t+1)},\mathbf{x}_{j}^{(t+1)})$ 
of $\mathbf{A}^{(t+1)}$, for $t=0, \ldots, T-1$. Generally, the transformation of nodes of a graph 
into low-dimensional numerical vectors is known as \emph{node embedding} \cite{xu2021understanding}. 
Herein, our focus lies 
exclusively on algebraic problems, i.e., no specific geometric/connectivity characteristics 
in either the graph updates or the spectral properties of the adjacency matrices are assumed.

Since graph updates $\calG^{(t)} \rightarrow \calG^{(t+1)}$ for $\ t=0, \ldots, T-1$, occur sequentially, 
the main goal is to develop an efficient  embedding update algorithm for a generic adjacency update problem. This update algorithm can then be applied $T$ times. For simplicity, the remainder of this section and Sec.~\ref{sec:projections_main} focus on a single time-step update. Let $\mathbf{A}$ denote a $N\times N$ 
sparse and symmetric adjacency matrix and consider its update:
\begin{equation}
\label{eq:adj-update-single}
\widehat{\mA} = \overline{\mA} + \mDelta = 
    \begin{bmatrix} 
        \mathbf{A} & \mathbf{0}_{N,S} \\ 
        \mathbf{0}_{S,N} & \mathbf{0}_{S,S}
    \end{bmatrix} +
    \begin{bmatrix}
        \mathbf{K} & \mathbf{G} \\ 
        \mathbf{G}^\top & \mathbf{C}
    \end{bmatrix}.
\end{equation}
Here, $\overline{\mA}$ denotes the extension of $\mA$ by $S$ identically zero rows and columns and $\mDelta$ encodes the incoming update. The update matrix 
$\mDelta$ corresponds to the addition/deletion of graph edges after the extension of the graph with $S$ nodes. With $(\lambda_j, \vx_j)$ denoting the $j$th eigenpair of $\mA$, The following eigenvalue equation is trivially satisfied for $\overline{\mA}$:
\begin{equation}
    \overline{\mA} \overline{\vx}_j = \lambda_j \overline{\vx}_j,
\end{equation}
where $\overline{\mathbf{x}}_j=\begin{bmatrix} \mathbf{x}_j\\
\mathbf{0}_{S,1}\end{bmatrix}$. 
The problem of each time-step update $t\rightarrow t+1$ of the $K$ leading eigenpairs can then be stated as:
\begin{problem}
\label{prob:st-update}\emph{
Given the eigenpairs $\{(\lambda_{j}, \vx_{j})\}_{j=1}^K$ of the $N\times N$ matrix ${\mA}$ and a $(N+S)\times(N+S)$ perturbation 
matrix $\mDelta$, estimate the eigenpairs 
$\{(\widehat{\lambda}_{j}, \widehat{\mathbf{x}}_{j})\}_{j=1}^{K}$ of the matrix $\widehat{\mathbf{A}}$.}
\end{problem}

Note that, while the ensuing discussion focuses on binary adjacency matrices, the methods developed can be readily applied to graphs associated with weighted adjacency matrices.

\subsection{Prior art}
\label{ssec:prior_art}
The most straightforward approach to compute the $K$ leading eigenpairs of the  
matrix $\widehat{\mA}$ is to compute them from scratch, via direct eigendecomposition, ignoring the pre-computed eigenvectors of $\mathbf{A}$. A Lanczos-based solver ~\cite{simon1984lanczos} incurs a computational complexity of  $O(|{\cal E}^{(t+1)}|K+(N^{(t)}+S^{(t+1)})K^2)$, rendering this approach  inefficient 
for frequent graph updates \cite{kalantzis2023matrix}.
Algorithms that approximate the $K$ leading eigenpairs of  $\widehat{\mA}$ 
via updating those of $\overline{\mA}$ are generally trading accuracy for speed. One 
such class of algorithms is based on \emph{eigendecomposition tracking}, by adapting the SVD-updating schemes of \cite{zha1999updating,kalantzis2021projection,vecharynski2014fast}. 
Specifically, \cite{zha1999updating} describes three separate methods to update 
$K$-leading singular values and singular vectors of a matrix  for column-wise 
augmentation, row-wise augmentation, and low-rank updates. Similar ideas to update 
the $K$-smallest eigenpairs of normalized Laplacian matrices are considered in \cite{dhanjal2014efficient} for three types of dynamic graph evolution: topological updates, node addition, and node removal. In \cite{deng2024efficient}, a modification of \cite{dhanjal2014efficient} is developed for the case of node addition by introducing 
an error-correction step. Other algorithms to update eigenvector and singular vector subspaces, 
subject to matrix perturbations and augmentations, can be found in \cite{moonen1992singular,hall2002adding,brand2002incremental,brand2003fast,degroat2002efficient}. Similarly, classical algorithms for the problem of subspace tracking and signal estimation for sliding and sliding windows can be found in \cite{badeau2005fast,abed2002fast,chi2013petrels} while algorithms for online eigenspace monitoring and detection of change in sequences of random 
graphs can be found in \cite{marenco2022online,fiori2023gradient}. Nevertheless, subspace tracking algorithms typically do not handle cases where the dimensionality of the subspace increases, corresponding to node additions.

A second class of algorithms to update the leading eigenpairs is based on 
constructing quadratic-error approximations of $\widehat{\lambda}_j$ and 
$\mathbf{\widehat{x}}_j$ by multiplying the perturbation matrix 
$\mathbf{\Delta}$ with the eigenvectors of $\mathbf{\overline{A}}$.
Instead of re-computing the entire eigendecomposition, algorithms based on first-order sensitivity analysis approximate the updated eigenvectors by applying corrections derived from the update matrix itself. As a result, such approaches can be appealing 
when the adjacency (or Laplacian) matrix is updated sparsely. Research on eigenpair updates for dynamic graphs focuses on maintaining spectral information efficiently as graph topologies evolve.
The work in \cite{chen2015fast,chen2017eigenfunctions} utilizes  matrix perturbation theory to update top-$K$ eigenpairs with linear complexity, specifically targeting edge-level modifications. Building on the concept of approximation, the work in \cite{mitz2022perturbation} advanced the use of sensitivity-based frameworks within kernel approximations, demonstrating how spectral properties can be maintained even under significant data perturbations, by focusing on the stability of the underlying operator. However, as will be shown in the ensuing sections, these first-order approaches ignore information from new nodes that are added to a graph. A first effort to extend perturbation-based algorithms to projection subspaces via resolvent-based approaches was presented in \cite{kalantzis2023matrix}. An algorithm to mitigate the error accumulated  by a sequence of incremental updates, is proposed in \cite{zhang2017timers} which uses a theoretical error bound to trigger a full Singular Value Decomposition (SVD) restart only when the accumulated error exceeds a pre-defined threshold, ensuring a balance between computational efficiency and spectral accuracy. 

The current work builds on the aforementioned first-order  analysis techniques and advances them by introducing a projection-based algorithm that attains high-quality approximations of the top-$K$ eigenpairs of an evolving adjacency matrix, while enjoying low computational complexity and memory requirements.

The remainder of this section provides additional details regarding first-order analysis methods.

\subsection{Updating embeddings using perturbation-based techniques}
\label{ssec:perturbation}

First-order analysis methods update the leading eigenpairs of $\mA$ by constructing quadratic-error approximations of $\widehat{\lambda}_j$ and $\widehat{\vx}_j$.
Under the assumption that $\lambda_j$ 
is simple, a first-order sensitivity analysis yields \cite{byron1992mathematics, kato2013perturbation, wilkinson1965algebraic,rellich1969perturbation, stewart1990matrix}:
\begin{equation} \label{eq:pert}
\begin{aligned}
    \widehat{\lambda}_j & = \lambda_j + \overline{\mathbf{x}}_j^\top \mathbf{\Delta \overline{\mathbf{x}}}_j + O(\|\mathbf{\Delta}\|^2),\\
    \mathbf{\widehat{x}}_j & = \overline{\mathbf{x}}_j +\sum_{i=1,i\neq j}^{i=N} \dfrac{\overline{\mathbf{x}}_i^\top\mathbf{\Delta}\overline{\mathbf{x}}_j}{\lambda_j-\lambda_i}\overline{\mathbf{x}}_i + O(\|\mathbf{\Delta}\|^2).
\end{aligned}
\end{equation}
Extensions for the case of repeating eigenvalues can be found in, e.g., \cite{byron1992mathematics}. Next, three algorithms that approximate 
the $K$ leading eigenpairs of the matrix $\widehat{\mA}$ leveraging 
first-order eigenvalue/eigenvector expansions are introduced. Henceforth, these approaches will be referred to as \emph{perturbation-based} techniques.

\subsubsection{TRacking eIgenPairs Basic (TRIP-Basic)}

While the eigenvalue update $\widehat{\lambda}_j$ in (\ref{eq:pert}) depends solely on $(\lambda_j, \mathbf{\overline{x}}_j)$, the corresponding eigenvector update $\mathbf{\widehat{x}}_j$ requires all $N$ eigenpairs of $\mathbf{\overline{A}}$. Nonetheless, only the $K$ leading eigenpairs of $\mathbf{\overline{A}}$ are available.  TRIP-Basic \cite{chen2015fast} approximates $\widehat{\vx}_j$ via the $K$ eigenpairs 
$\{({\lambda}_{j}, {\mathbf{\overline{x}}}_{j})\}_{j=1}^{K}$ . This yields $(\widetilde{\lambda}_j, \widetilde{\vx}_j)$ as the estimate of $j$th eigenpair of $\widehat{\mA}$ where:
\begin{align}
    \label{eq:trip-basic-eigvec}
    \widetilde{\lambda}_j & = \lambda_j + \overline{\mathbf{x}}_j^\top 
    \mathbf{\Delta \overline{\mathbf{x}}}_j\\
    \mathbf{\widetilde{x}}_j  &= \mathbf{x}_j +\sum_{i=1,i\neq j}^{i=K} \dfrac{\overline{\mathbf{x}}_i^\top\mathbf{\Delta}\overline{\mathbf{x}}_j}{\lambda_j-\lambda_i}\overline{\mathbf{x}}_i,
\end{align}

\noindent TRIP-Basic incurs computational complexity of $O(K(N + {\rm nnz}(\mathbf{\Delta})))$ and memory complexity of $O(NK + {\rm nnz}(\mathbf{\Delta}))$.

\subsubsection{TRacking eIgenPairs (TRIP)}
The TRackIng eigenPairs (TRIP) algorithm extends TRIP-Basic 
by delaying the computation of $\mathbf{\widetilde{x}}_j$ 
until $\widetilde{\lambda}_j$ becomes available \cite{chen2015fast}. 
Let the approximate eigenvector be written as 
$\mathbf{\widetilde{x}}_j = \overline{\mathbf{X}}_K \mathbf{b}_j = \sum_{i=1}^{i=K}[\mathbf{b}_j]_i
\overline{\mathbf{x}}_i$, and define the $K\times K$ diagonal: 
\begin{equation*}
    \mathbf{W}_j=\mathrm{diag}(\tilde{\lambda}_j-\lambda_1,\ldots,\tilde{\lambda}_j-\lambda_K),
\end{equation*}
with $\tilde{\lambda}_j$  defined as in Eq.~\eqref{eq:trip-basic-eigvec}.
Similarly, let 
$\mathbf{\overline{X}}_K
=\left[\mathbf{\overline{x}}_1,\ldots,\mathbf{\overline{x}}_K\right]$
denote the matrix formed by the columns of the $K$ leading eigenvectors of $\mathbf{\overline{A}}$. 
The TRIP algorithm computes the vector $\mathbf{b}_j$ by solving 
the following $K\times K$ system of linear equations:
\begin{equation}\label{eq:beta}
        (\mathbf{W}_j-\overline{\mathbf{X}}_K^\top \mathbf{\Delta}\overline{\mathbf{X}}_K)\mathbf{b}_j=\overline{\mathbf{X}}_K^\top \mathbf{\Delta}\overline{\mathbf{x}}_j.
\end{equation}

\noindent TRIP incurs a computational complexity of $O(K^3 + K(N+{\rm nnz}(\mathbf{\Delta})))$, and a memory complexity of $O(NK + K^2 + {\rm nnz}(\mathbf{\Delta}))$. 

\subsubsection{Updating with Residual Modes}

A major limitation of TRIP-Basic and TRIP is the lack of a correction mechanism for the part of $\mathbf{\widehat{x}}_j$ that is associated with the non-computed eigenvectors $\mathbf{x}_{K+1},\ldots,\mathbf{x}_N$ of $\mathbf{A}$. A technique to achieve 
the latter under the same asymptotic complexity is to introduce residual modes \cite{mitz2022perturbation}. Expressing \eqref{eq:pert} up to the $O(\|\mathbf{\Delta}\|^2)$ term:
\begin{align}
\mathbf{\widehat{x}}_j -\overline{\mathbf{x}}_j =
\sum_{i=1,i\neq j}^{K} \dfrac{\overline{\mathbf{x}}_i^\top\mathbf{\Delta}\overline{\mathbf{x}}_j}{\lambda_j-\lambda_i}\overline{\mathbf{x}}_i+\sum_{i=K+1}^{N} \dfrac{\overline{\mathbf{x}}_i^\top\mathbf{\Delta}\overline{\mathbf{x}}_j}{\lambda_j-\lambda_i}\overline{\mathbf{x}}_i.   \label{eq:resmode_ideal} 
\end{align}
Replacing the unknown eigenvalues $\lambda_{K+1},\ldots,\lambda_N$ by some scalar $\mu\in \mathbb{R}$ allows to extract a common multiplier:
\begin{equation*}
    \sum_{i=K+1}^{N} \dfrac{\overline{\mathbf{x}}_i^\top\mathbf{\Delta}\overline{\mathbf{x}}_j}{\lambda_j-\mu}\overline{\mathbf{x}}_i = 
    \dfrac{1}{\lambda_j-\mu}\sum_{i=K+1}^{N} \overline{\mathbf{x}}_i\overline{\mathbf{x}}_i^\top (\mathbf{\Delta}\overline{\mathbf{x}}_j),
\end{equation*}
where the orthogonality of the eigenvectors $\mathbf{x}_1,\ldots,\mathbf{x}_N$, 
yields:
\begin{align*}
\sum_{i=K+1}^{N} \overline{\mathbf{x}}_i\overline{\mathbf{x}}_i^\top (\mathbf{\Delta}\overline{\mathbf{x}}_j) & =
    (\overline{\mathbf{x}}_{K+1}\overline{\mathbf{x}}_{K+1}^\top+\ldots +\overline{\mathbf{x}}_N\overline{\mathbf{x}}_N^\top)\mathbf{\Delta}\overline{\mathbf{x}}_j\\ &= 
    (\mathbf{I}-\overline{\mathbf{X}}_K\overline{\mathbf{X}}_K^\top)\mathbf{\Delta}\overline{\mathbf{x}}_j,
\end{align*}
Thus, \eqref{eq:resmode_ideal} can be expressed as:
\begin{equation*}\label{eq:gamma}
    \mathbf{\widetilde{x}}_j  = \overline{\mathbf{x}}_j+ \sum_{i=1,i\neq j}^{K} \dfrac{\overline{\mathbf{x}}_i^\top\mathbf{\Delta}\overline{\mathbf{x}}_j}{\lambda_j-\lambda_i}\overline{\mathbf{x}}_i + \dfrac{1}{\lambda_j-\mu}(\mathbf{I}-\overline{\mathbf{X}}_K\overline{\mathbf{X}}_K^\top)\mathbf{\Delta}\overline{\mathbf{x}}_j.
\end{equation*}

\noindent Residual modes incur a computational complexity of $O(K(N+{\rm nnz}(\mathbf{\Delta})))$, and a memory complexity of $O(NK + {\rm nnz}(\mathbf{\Delta}))$. 

\subsection{Adding graph nodes} 
\label{ssec:adding_nodes}

The techniques discussed so far can provide a fast approximation of the updated 
eigenpairs $(\widehat{\lambda}_j,\mathbf{\widehat{x}}_j),\ j=1,\ldots,K$. 
However, as shown in the next proposition, these techniques might not be suitable 
when the graph ${\cal G}$ is expanded with $S>0$ nodes.

\begin{proposition}\label{prom1} 
Consider the setup of Problem \ref{prob:st-update}. For any of the approaches outlined in Section \ref{ssec:perturbation}, the approximation $(\widetilde{\lambda}_j,\widetilde{\mathbf{x}}_j)$ of $(\widehat{\lambda}_j,\widehat{\mathbf{x}}_j)$ disregards the sub-matrix $\mathbf{C}$ of $\mDelta$. 
\end{proposition}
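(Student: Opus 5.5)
The plan is to notice that every perturbation-based formula in Section~\ref{ssec:perturbation} sees $\mDelta$ only through two kinds of products: quadratic forms $\overline{\mathbf{x}}_i^\top\mDelta\overline{\mathbf{x}}_j$ and matrix-vector products $\mDelta\overline{\mathbf{x}}_j$, where every $\overline{\mathbf{x}}_j$ is a zero-padded eigenvector of $\mA$. So the first step is to compute these two objects using the block structure in \eqref{eq:adj-update-single}. Writing $\overline{\mathbf{x}}_j = [\mathbf{x}_j^\top,\ \mathbf{0}_{S,1}^\top]^\top$, block multiplication gives
\begin{equation*}
\mDelta\overline{\mathbf{x}}_j = \begin{bmatrix} \mathbf{K} & \mathbf{G}\\ \mathbf{G}^\top & \mathbf{C}\end{bmatrix}\begin{bmatrix}\mathbf{x}_j\\ \mathbf{0}_{S,1}\end{bmatrix} = \begin{bmatrix}\mathbf{K}\mathbf{x}_j\\ \mathbf{G}^\top\mathbf{x}_j\end{bmatrix},
\end{equation*}
and therefore
\begin{equation*}
\overline{\mathbf{x}}_i^\top\mDelta\overline{\mathbf{x}}_j = \mathbf{x}_i^\top\mathbf{K}\mathbf{x}_j .
\end{equation*}
In both expressions $\mathbf{C}$ does not appear; the quadratic form does not involve $\mathbf{G}$ either.

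Next I will go through the three methods one at a time. For TRIP-Basic, $\widetilde{\lambda}_j$ in \eqref{eq:trip-basic-eigvec} depends only on $\lambda_j$ and $\overline{\mathbf{x}}_j^\top\mDelta\overline{\mathbf{x}}_j=\mathbf{x}_j^\top\mathbf{K}\mathbf{x}_j$. The vector $\widetilde{\mathbf{x}}_j$ is $\overline{\mathbf{x}}_j$ plus a combination of the $\overline{\mathbf{x}}_i$. Its coefficients are built from the $\lambda$'s and the quadratic forms $\mathbf{x}_i^\top\mathbf{K}\mathbf{x}_j$, so neither output depends on $\mathbf{C}$.

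For TRIP, $\widetilde{\lambda}_j$ is the same as before. The linear system \eqref{eq:beta} has matrix $\mathbf{W}_j-\overline{\mathbf{X}}_K^\top\mDelta\overline{\mathbf{X}}_K = \mathbf{W}_j - \mathbf{X}_K^\top\mathbf{K}\mathbf{X}_K$ and right-hand side $\mathbf{X}_K^\top\mathbf{K}\mathbf{x}_j$. Hence $\mathbf{b}_j$, and with it $\widetilde{\mathbf{x}}_j=\overline{\mathbf{X}}_K\mathbf{b}_j$, is independent of $\mathbf{C}$.

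For residual modes, the only new term is $(\lambda_j-\mu)^{-1}(\mathbf{I}-\overline{\mathbf{X}}_K\overline{\mathbf{X}}_K^\top)\mDelta\overline{\mathbf{x}}_j$. By the first display, $\mDelta\overline{\mathbf{x}}_j$ contains only $\mathbf{K}$ and $\mathbf{G}$. The projector $\mathbf{I}-\overline{\mathbf{X}}_K\overline{\mathbf{X}}_K^\top$ does not involve $\mDelta$ at all, so this term is also free of $\mathbf{C}$.

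Putting these together: for fixed $\mathbf{K}$ and $\mathbf{G}$, every output is the same for all choices of $\mathbf{C}$, which is exactly the claim. There is no real obstacle here. The only care needed is to check that no formula uses $\mDelta$ in any other way, for instance $\mDelta$ applied to a vector with nonzero trailing block, or $\mDelta^2$. Inspecting \eqref{eq:trip-basic-eigvec}, \eqref{eq:beta} and the residual-mode formula confirms this, because all of their vectors lie in $\Ran(\overline{\mathbf{X}}_K)$ or in the span of the padded $\overline{\mathbf{x}}_i$. As a side remark, I may note that the new nodes receive nonzero entries only through $\mathbf{G}^\top\mathbf{x}_j$ in the residual-mode update, while TRIP and TRIP-Basic leave them identically zero.
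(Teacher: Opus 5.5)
Your proposal is correct and follows the paper's proof. Both rest on the block identities $\overline{\mathbf{x}}_i^\top\mDelta\overline{\mathbf{x}}_j=\mathbf{x}_i^\top\mathbf{K}\mathbf{x}_j$ and $\mDelta\overline{\mathbf{x}}_j$ having blocks $\mathbf{K}\mathbf{x}_j$ and $\mathbf{G}^\top\mathbf{x}_j$, so $\mathbf{C}$ never enters the eigenvalue update or the residual-mode term; your method-by-method check, including the TRIP linear system, just spells out what the paper leaves implicit.
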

\begin{proof}
Following any of the approaches outlined in Section \ref{ssec:perturbation}, the eigenvalue $\lambda_j$ is updated via the formula $\widetilde{\lambda}_j = \lambda_j + \overline{\mathbf{x}}_j^\top \mathbf{\Delta \overline{x}}_j$. However, note that for any 
$1\leq i,j\leq K$: 
\begin{equation*}
\mathbf{\overline{x}}_i^\top\mathbf{\Delta}\mathbf{\overline{x}}_j = 
\begin{bmatrix} \mathbf{x}_i^\top &
\mathbf{0}_{S,1}^\top\end{bmatrix}\begin{bmatrix}\mathbf{K} & \mathbf{G}\\ \mathbf{G}^\top & \mathbf{C}
\end{bmatrix}
\begin{bmatrix} \mathbf{x}_j\\
\mathbf{0}_{S,1}\end{bmatrix}=\mathbf{x}_i^\top\mathbf{K}\mathbf{x}_j,
\end{equation*}
and thus $\widetilde{\lambda}_j = \lambda_j+\mathbf{x}_i^\top\mathbf{K}\mathbf{x}_i$. 
Then, the residual mode of the $j$th updated eigenvector 
is equal to: 
\begin{align*}
    \left(\mathbf{I}-\mathbf{\overline{X}}_K\mathbf{\overline{X}}_K^\top\right)\mathbf{\Delta}\mathbf{\overline{x}}_j &= 
    \begin{bmatrix}
        \mathbf{I}-\mathbf{X}_K\mathbf{X}_K^\top & \mathbf{0}_{N,S}\\
        \mathbf{0}_{S,N} & \mathbf{I}_{S,S}\\
    \end{bmatrix}
    \begin{bmatrix}
\mathbf{K}\mathbf{x}_j \\
\mathbf{G}^\top\mathbf{x}_j\\
    \end{bmatrix}\\ &=
    \begin{bmatrix}
\left(\mathbf{I}-\mathbf{X}_K\mathbf{X}_K^\top\right)\mathbf{K}\mathbf{x}_j \\
\mathbf{G}^\top\mathbf{x}_j\\
    \end{bmatrix}.
\end{align*}
\end{proof}
\begin{corollary}
 Consider the same parameters as in Proposition \ref{prom1} and let $\mathbf{K}=\mathbf{0}_{N,N}$. 
 Then, $\widetilde{\lambda}_j={\lambda}_j$. 
\end{corollary}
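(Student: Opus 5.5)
The plan is to specialize the computation already carried out in the proof of Proposition~\ref{prom1} to the case $\mathbf{K}=\mathbf{0}_{N,N}$. Every perturbation-based approach of Section~\ref{ssec:perturbation} (TRIP-Basic, TRIP, and residual modes) updates the eigenvalue by the same rule
\begin{equation*}
\widetilde{\lambda}_j = \lambda_j + \overline{\mathbf{x}}_j^\top \mDelta \overline{\mathbf{x}}_j ,
\end{equation*}
given in \eqref{eq:trip-basic-eigvec}. TRIP also uses this $\widetilde{\lambda}_j$ as input to $\mathbf{W}_j$. So it suffices to evaluate the scalar $\overline{\mathbf{x}}_j^\top \mDelta \overline{\mathbf{x}}_j$.

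First, I will recall the block identity established in the proof of Proposition~\ref{prom1}. Because $\overline{\mathbf{x}}_j$ is $\mathbf{x}_j$ padded with $S$ zeros, only the upper-left block of $\mDelta$ contributes:
\begin{equation*}
\overline{\mathbf{x}}_i^\top \mDelta \overline{\mathbf{x}}_j = \mathbf{x}_i^\top \mathbf{K}\mathbf{x}_j .
\end{equation*}
The blocks $\mathbf{G}$, $\mathbf{G}^\top$ and $\mathbf{C}$ are annihilated by the zero padding. Setting $i=j$ gives $\widetilde{\lambda}_j = \lambda_j + \mathbf{x}_j^\top\mathbf{K}\mathbf{x}_j$.

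Second, I substitute $\mathbf{K}=\mathbf{0}_{N,N}$. Then $\mathbf{x}_j^\top\mathbf{K}\mathbf{x}_j = 0$, and hence $\widetilde{\lambda}_j=\lambda_j$ for every $j=1,\dots,K$, regardless of $\mathbf{G}$ and $\mathbf{C}$.

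There is no real obstacle here. The only point needing care is to note that the eigenvalue estimate is the same across all three methods, so the conclusion covers ``any of the approaches,'' as in the proposition. It is also worth remarking that the eigenvector corrections $\overline{\mathbf{x}}_i^\top\mDelta\overline{\mathbf{x}}_j$ for $i\neq j$ vanish as well. In this case the only surviving update is the residual-mode term with lower block $\mathbf{G}^\top\mathbf{x}_j$, which illustrates that pure node additions leave the estimated spectrum unchanged.
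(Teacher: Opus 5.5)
Your proposal is correct and follows the same route as the paper. The paper's proof likewise evaluates $\overline{\mathbf{x}}_i^\top\mDelta\overline{\mathbf{x}}_j$ block-wise with $\mathbf{K}=\mathbf{0}_{N,N}$, observes that the zero padding makes it vanish, and concludes $\widetilde{\lambda}_j=\lambda_j$. Your two-step version (first reduce to $\mathbf{x}_j^\top\mathbf{K}\mathbf{x}_j$ as in Proposition~\ref{prom1}, then substitute $\mathbf{K}=\mathbf{0}_{N,N}$) is the same computation.
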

\begin{proof}
It is straightforward to verify that for any 
$1\leq i,j\leq K$: 
\begin{equation*}
\mathbf{\overline{x}}_i^\top\mathbf{\Delta}\mathbf{\overline{x}}_j = 
\begin{bmatrix} \mathbf{x}_i^\top &
\mathbf{0}_{S,1}^\top\end{bmatrix}\begin{bmatrix}\mathbf{0}_{N,N} & \mathbf{G}\\ \mathbf{G}^\top & \mathbf{C}
\end{bmatrix}
\begin{bmatrix} \mathbf{x}_j\\
\mathbf{0}_{S,1}\end{bmatrix}=0,
\end{equation*}
and thus $\widetilde{\lambda}_j = \lambda_j$.
\end{proof}
Thus, perturbation-based techniques such as those presented 
in Section \ref{ssec:perturbation} fall short when the graph ${\cal G}$ is dynamically 
augmented with new nodes.

\section{Updating the eigenspace via Matrix Projections}
This section gradually introduces the proposed eigenpair update scheme for a single time-step update. First, the algorithms of Sec.~\ref{ssec:perturbation} are presented under a novel unified viewpoint. Following that, this viewpoint is extended to capture the cases of new node arrivals.
\label{sec:projections_main}

\subsection{A unifying viewpoint of perturbation-based approaches}

In Sec.~\ref{ssec:perturbation}, the approximation of $\widehat{\mathbf{x}}_j$ is computed via a matrix-vector multiplication 
between a basis matrix and a vector of scalar coefficients, i.e., for a basis 
matrix $\mathbf{Z}\in \mathbb{R}^{N\times D}$, the approximate eigenvectors are computed as
$\widehat{\mathbf{x}}_j=\mathbf{Z}\mathbf{v}_j,\ \mathbf{v}_j\in \mathbb{R}^{D\times 1},\ j=1,\ldots,K$. 
TRIP-Basic and TRIP approximate the eigenvector $\widehat{\mathbf{x}}_j$ 
via  $\widetilde{\mathbf{x}}_j = \mathbf{\overline{X}}_K \mathbf{a}_j,\ \mathbf{a}_j\in \mathbb{R}^K$ and 
    $\widetilde{\mathbf{x}}_j = \mathbf{\overline{X}}_K \mathbf{b}_j,\ \mathbf{b}_j\in \mathbb{R}^K$, respectively, 
    where:
\begin{equation*}
    [\mathbf{a}_j]_i = \begin{cases}
        1, & \text{if}\ i = j \\
        \dfrac{
            \overline{\mathbf{x}}_i^\top \mathbf{\Delta} \overline{\mathbf{x}}_j
        }{
            \lambda_j - \lambda_i
        }, & \text{otherwise};
\end{cases} 
\end{equation*}
and:
\begin{equation*}
    \mathbf{b}_j = 
        (\mathbf{W}_j - \mathbf{\overline{X}}_k^\top\mathbf{\Delta}\mathbf{\overline{X}}_k)^{-1} \mathbf{\overline{X}}_k^\top \mathbf{\Delta} \mathbf{\overline{x}}_j.
\end{equation*}
Thus, TRIP-Basic and TRIP are extracting $\widetilde{\mathbf{x}}_j$ from the subspace 
$\Ran(\mathbf{\overline{X}}_K)$, where each basis vector is scaled according to the entries of the vectors $\mathbf{a}_j$ and $\mathbf{b}_j$, respectively. 
Similarly, the addition of residual modes leads to an approximation of the  eigenvector $\widehat{\mathbf{x}}_j$ via $\widetilde{\mathbf{x}}_j = 
    \left[\mathbf{\overline{X}}_K, (\mathbf{I}-\mathbf{\overline{X}}_K\mathbf{\overline{X}}_K^\top)\mathbf{\Delta}\mathbf{\overline{x}}_j\right] \mathbf{c}_j,\ \mathbf{c}_j\in \mathbb{R}^N$, where for any $\mu \in \mathbb{R}$ not equal to $\lambda_j$:
\begin{equation*}
    [\mathbf{c}_j]_i = \begin{cases}
        1, & \text{if}\ i = j \\
        \dfrac{
            \overline{\mathbf{x}}_i^\top \mathbf{\Delta} \overline{\mathbf{x}}_j
        }{
            \lambda_j-\lambda_i
        }, & \text{if}\ 1 < i \leq k \\
        1/(\lambda_j - \mu), & \text{otherwise}.
\end{cases} 
\end{equation*}
Thus, the approximate eigenvector $\widetilde{\mathbf{x}}_j$ is extracted  
from the subspace $\Ran\left(\left[\mathbf{\overline{X}}_K, (\mathbf{I}-\mathbf{\overline{X}}_K\mathbf{\overline{X}}_K^\top)\mathbf{\Delta}
\mathbf{\overline{x}}_j\right]\right)$ where each basis vector is scaled 
according to the entries of the vector $\mathbf{c}_j$. 

Nonetheless, there may exist linear combinations that lead to improved approximation accuracy compared to pre-computing the coefficients $\mathbf{v}$, that is, given a fixed basis $\mathbf{Z}$, there may exist $\mathbf{v}^*$ such that:
\begin{equation*}
\|\widehat{\mathbf{A}}\mathbf{Z}\mathbf{v}^* - \lambda\mathbf{Z}\mathbf{v}^*\| < \|\widehat{\mathbf{A}}\mathbf{Z}\mathbf{v} - \lambda\mathbf{Z}\mathbf{v}\|.
\end{equation*}
The ensuing subsection summarizes a projection scheme that derives the 
basis coefficients following an optimality criterion instead of the analytic formulas 
discussed so far.

\subsection{Rayleigh-Ritz projections}
\label{ssec:RR}
 
The Rayleigh-Ritz (RR) projection method extracts approximations of the 
algebraically smallest and largest eigenvalues, and their corresponding 
eigenvectors, of  $\widehat{\mA}$ from an ansatz subspace 
${\cal Z}\subseteq \mathbb{R}^N$ \cite{golub2013matrix}. Specifically, 
let $\mathbf{Z}\in \mathbb{R}^{N\times D}$ denote an orthonormal basis 
of the subspace ${\cal Z}$ where $D\in \mathbb{N},\ D>K$, denotes the 
corresponding subspace dimension, and 
$\{(\theta_{j},\mathbf{f}_{j})\}_{j=1}^{K}$ denote the 
$K$ leading eigenpairs of the matrix $\mathbf{Z}^\top \widehat{\mA}\mathbf{Z}$ 
such that $\theta_{1}\geq\theta_{2}\geq\cdots \geq\theta_{K}$. Then, the $j$th leading 
eigenpair of the matrix $\widehat{\mA}$ can be approximated by so-called the $j$th leading 
Ritz pair $(\theta_{j},\mathbf{Z}\mathbf{f}_{j})$. This procedure 
has an asymptotic complexity of $O(D^3+ND^2)$ and is outlined in 
Alg.~\ref{alg:rayleigh-ritz}.
\begin{algorithm} 
\caption{The Rayleigh-Ritz procedure \cite{golub2013matrix}.}
\label{alg:rayleigh-ritz}
\begin{algorithmic}[1]
\Require{Matrix $\widehat{\mA} \in \mathbb{R}^{N\times N}$, orthonormal basis $\mathbf{Z} \in \mathbb{R}^{N\times D}$}
\Ensure{Ritz pairs: $\{(\theta_j, \mathbf{Z}\mathbf{f}_{j})\}_{j=1}^K$}
\State Compute the leading eigenpairs 
$\{(\theta_j, \mathbf{f}_{j})\}_{j=1}^K$ of $\mathbf{Z}^\top\widehat{\mA}\mathbf{Z}$\label{step:rrmat}
\State Form the $K$ leading Ritz pairs $\{( \theta_j, \mathbf{Z}\mathbf{f}_{j})\}_{j=1}^K$
\end{algorithmic}
\end{algorithm}

The main idea behind the RR procedure is to extract approximate eigenpairs 
of the matrix $\widehat{\mA}$ from a subspace ${\cal Z}$ which, ideally, 
encapsulates the invariant subspace associated with $K$ leading eigenvectors $\widehat{\mathbf{x}}_1,\ldots,\widehat{\mathbf{x}}_K$. The Rayleigh-Ritz 
projection computes the coefficients that multiply the basis vectors by 
imposing the Ritz-Galerkin condition 
$\mathbf{Z}^\top\left(\widehat{\mA}\mathbf{Z}\mathbf{f}-\theta \mathbf{Z}\mathbf{f}\right)=0$. Despite its simplicity, in the ensuing theorem, 
the RR procedure is shown to extract the optimal eigenvector approximations 
from the subspace ${\cal Z}$. 
\begin{theorem}[Theorem~7.1 \cite{demmel1997applied}] \label{thm1}
The minimum of $\|\widehat{\mA}\mathbf{Z}-\mathbf{Z}\mathbf{S}\|_2$ over all $d\times d$ matrices
$\mathbf{S}$ is achieved when $\mathbf{S}=\mathbf{\mathbf{Z}^\top \widehat{\mA}\mathbf{Z}}$ in which case $\|\widehat{\mA}\mathbf{Z}-\mathbf{Z}\mathbf{S}\|_2 = \|\mathbf{Z}^\top \widehat{\mA}\mathbf{Z}_\bot\|_2$ where $[\mathbf{Z},\mathbf{Z}_{\bot}]$ is orthonormal. Writing 
$\mathbf{\mathbf{Z}^\top \widehat{\mA}\mathbf{Z}}=\mathbf{F}\mathbf{\Theta}\mathbf{F}^\top,\ \mathbf{F}^\top\mathbf{F}=\mathbf{I}$, the minimum $\|\widehat{\mA}\mathbf{P}-\mathbf{P}\mathbf{D}\|$ over all $N\times D$ orthogonal matrices $\mathbf{P}$ with $\Ran(\mathbf{P})=\Ran(\mathbf{Z})$ 
and $D\times D$ diagonal matrices $\mathbf{D}$ is also $\|\mathbf{Z}^\top \widehat{\mA}\mathbf{Z}_\bot\|_2$ 
and is achieved when $\mathbf{P}=\mathbf{Z}\mathbf{F}$ and $\mathbf{D}=\mathbf{\Theta}$.
\end{theorem}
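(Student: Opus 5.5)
The plan is to complete $\mathbf{Z}$ to an orthogonal matrix $\mathbf{Q}=[\mathbf{Z},\mathbf{Z}_\bot]$ and use the invariance of the spectral norm under left multiplication by $\mathbf{Q}^\top$. For any $D\times D$ matrix $\mathbf{S}$,
\begin{equation*}
\|\widehat{\mA}\mathbf{Z}-\mathbf{Z}\mathbf{S}\|_2=\|\mathbf{Q}^\top(\widehat{\mA}\mathbf{Z}-\mathbf{Z}\mathbf{S})\|_2=\left\|\begin{bmatrix}\mathbf{Z}^\top\widehat{\mA}\mathbf{Z}-\mathbf{S}\\ \mathbf{Z}_\bot^\top\widehat{\mA}\mathbf{Z}\end{bmatrix}\right\|_2 .
\end{equation*}
Only the top block depends on $\mathbf{S}$. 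The spectral norm of a stacked matrix is at least the norm of any one block, since $\|[\mathbf{M}_1;\mathbf{M}_2]\mathbf{v}\|^2=\|\mathbf{M}_1\mathbf{v}\|^2+\|\mathbf{M}_2\mathbf{v}\|^2\ge\|\mathbf{M}_2\mathbf{v}\|^2$. Hence the whole expression is bounded below by $\|\mathbf{Z}_\bot^\top\widehat{\mA}\mathbf{Z}\|_2$. Choosing $\mathbf{S}=\mathbf{Z}^\top\widehat{\mA}\mathbf{Z}$ zeroes the top block and attains this bound.

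Because $\widehat{\mA}$ is symmetric, $\|\mathbf{Z}_\bot^\top\widehat{\mA}\mathbf{Z}\|_2=\|(\mathbf{Z}^\top\widehat{\mA}\mathbf{Z}_\bot)^\top\|_2=\|\mathbf{Z}^\top\widehat{\mA}\mathbf{Z}_\bot\|_2$, which is the value stated in the theorem. Symmetry of $\widehat{\mA}$ also makes $\mathbf{Z}^\top\widehat{\mA}\mathbf{Z}$ symmetric, so the eigendecomposition $\mathbf{F}\mathbf{\Theta}\mathbf{F}^\top$ with orthogonal $\mathbf{F}$ exists.

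For the second claim, parametrize the admissible $\mathbf{P}$. Any $N\times D$ matrix with orthonormal columns and $\Ran(\mathbf{P})=\Ran(\mathbf{Z})$ can be written $\mathbf{P}=\mathbf{Z}\mathbf{U}$ with $\mathbf{U}$ a $D\times D$ orthogonal matrix. Then
\begin{equation*}
\|\widehat{\mA}\mathbf{P}-\mathbf{P}\mathbf{D}\|_2=\|(\widehat{\mA}\mathbf{Z}-\mathbf{Z}\mathbf{U}\mathbf{D}\mathbf{U}^\top)\mathbf{U}\|_2=\|\widehat{\mA}\mathbf{Z}-\mathbf{Z}\mathbf{S}'\|_2,\qquad \mathbf{S}'=\mathbf{U}\mathbf{D}\mathbf{U}^\top ,
\end{equation*}
using invariance of the norm under right multiplication by an orthogonal matrix. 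By the first part this is at least $\|\mathbf{Z}^\top\widehat{\mA}\mathbf{Z}_\bot\|_2$. Taking $\mathbf{U}=\mathbf{F}$ and $\mathbf{D}=\mathbf{\Theta}$ gives $\mathbf{S}'=\mathbf{Z}^\top\widehat{\mA}\mathbf{Z}$, so the bound is attained at $\mathbf{P}=\mathbf{Z}\mathbf{F}$.

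There is no real obstacle here, but two points need care. First, the matrices $\mathbf{S}$ in the statement, written "$d\times d$", are of size $D\times D$ in the paper's notation. Second, for the attainment statement we need $\mathbf{P}=\mathbf{Z}\mathbf{F}$ to be admissible, which holds because $\mathbf{F}$ is invertible and so $\Ran(\mathbf{Z}\mathbf{F})=\Ran(\mathbf{Z})$.
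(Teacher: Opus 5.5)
Your proof is correct; the paper gives no proof of its own and only cites Demmel, and your argument is essentially the standard proof of this result. It splits the residual $\widehat{\mA}\mathbf{Z}-\mathbf{Z}\mathbf{S}$ by Pythagoras into a part in $\Ran(\mathbf{Z})$, which depends on $\mathbf{S}$ and vanishes at $\mathbf{S}=\mathbf{Z}^\top\widehat{\mA}\mathbf{Z}$, and a part $\mathbf{Z}_\bot\mathbf{Z}_\bot^\top\widehat{\mA}\mathbf{Z}$ that does not depend on $\mathbf{S}$, and then reparametrizes $\mathbf{P}=\mathbf{Z}\mathbf{U}$ for the second claim. Since you never use symmetry of $\mathbf{S}$, your version covers the statement exactly as written, over all $D\times D$ matrices.
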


Theorem \ref{thm1} states that the columns of the matrix $\mathbf{Z}\mathbf{F}$ are the 
“optimal” approximate eigenvectors and the diagonal entries of $\mathbf{\Theta}$ are
the “optimal” approximate eigenvalues in the sense of minimizing the residual
$\|\widehat{\mA}\mathbf{P}-\mathbf{P}\mathbf{D}\|$.  
Thus, the choice of projection subspace ${\cal Z}$ and corresponding projection basis $\mathbf{Z}$ is critical for the approximation quality and computational complexity of the RR procedure. 

Following the above discussion, a natural choice for the projection subspace 
is either ${\cal Z} = \Ran(\mathbf{X}_K)$ or ${\cal Z} = \Ran\left(\left[\mathbf{\overline{X}}_K,(\mathbf{I}-\mathbf{\overline{X}}_K\mathbf{\overline{X}}_K^\top)\mathbf{\Delta}\mathbf{\overline{X}}_K\right]\right)$. While these subspaces are the same as in TRIP and Residual Modes respectively, using them in the RR procedure is expected to yield improved eigenvector approximation, as RR seeks the optimal coefficients.
Given a basis matrix $\mathbf{Z}$ corresponding to the aforementioned subspaces $\cal Z$, the eigenpairs of $\widehat{\mA}$ can be approximated via Alg.~\ref{alg:rayleigh-ritz}. Specifically, line~\ref{step:rrmat} of Alg.~\ref{alg:rayleigh-ritz} computes the 
$K$ dominant eigenpairs of:
\begin{equation}
    \mathbf{Z}^{\top}\widehat{\mA}\mathbf{Z} = \mathbf{Z}^{\top}\overline{\mA}\mathbf{Z} + \mathbf{Z}^{\top}\mDelta\mathbf{Z}.
\end{equation}
Nonetheless, the discussion in Sec.~\ref{ssec:adding_nodes}, and  
Proposition \ref{prom1}, indicate that using only ${\cal Z} = 
\Ran(\mathbf{X}_K)$ leads to parts of the update matrix being ignored, 
especially when $\mA$ has smaller size than $\mathbf{\widehat{A}}$.

\subsection{Constructing the projection subspace for expanding graphs}
\label{ssec:proj_subspace}

In this section an enhancement of the Rayleigh-Ritz projection subspace is proposed,
via re-visiting the structure of 
$(\mathbf{I}-\overline{\mathbf{X}}_K\overline{\mathbf{X}}_K^\top)\mathbf{\Delta}\overline{\mathbf{X}}_K$ in greater detail, focusing on the matrix product $\mathbf{\Delta}\overline{\mathbf{X}}_K$. 

\begin{proposition}
\label{pro3}
Partition $\mathbf{\Delta} = [\mathbf{\Delta}_1,\mathbf{\Delta}_2]$ where 
$\mathbf{\Delta}_1\in \mathbb{R}^{(N+S)\times N}$ and $\mathbf{\Delta}_2\in \mathbb{R}^{(N+S)\times S}$ denote the rectangular submatrices formed by the 
$N$ leading and $S$ trailing columns of $\mathbf{\Delta}$, respectively. Then: 
\begin{equation*}
\Ran\left(\mathbf{\Delta}\overline{\mathbf{X}}_K\right) \subseteq \Ran\left(\mathbf{\Delta}_1\right).
\end{equation*}
\end{proposition}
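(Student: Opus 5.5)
The plan is a direct block-multiplication argument exploiting the zero padding of $\overline{\mathbf{X}}_K$. Write $\overline{\mathbf{X}}_K = \begin{bmatrix}\mathbf{X}_K\\ \mathbf{0}_{S,K}\end{bmatrix}$, which follows from the definition $\overline{\mathbf{x}}_j=[\mathbf{x}_j^\top,\mathbf{0}_{S,1}^\top]^\top$ for every $j=1,\ldots,K$. With the column partition $\mathbf{\Delta}=[\mathbf{\Delta}_1,\mathbf{\Delta}_2]$ of the statement, block multiplication gives
\begin{equation*}
\mathbf{\Delta}\overline{\mathbf{X}}_K = \mathbf{\Delta}_1\mathbf{X}_K + \mathbf{\Delta}_2\mathbf{0}_{S,K} = \mathbf{\Delta}_1\mathbf{X}_K .
\end{equation*}
The trailing columns $\mathbf{\Delta}_2$, which carry $\mathbf{G}$ and $\mathbf{C}$ in their lower and upper blocks respectively, never touch $\overline{\mathbf{X}}_K$. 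This is consistent with Proposition~\ref{prom1}, where $\mathbf{C}$ is seen to be ignored.

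Next, I would invoke the elementary fact that $\Ran(\mathbf{M}\mathbf{B})\subseteq\Ran(\mathbf{M})$ for any conformable $\mathbf{M},\mathbf{B}$. Indeed, any $\mathbf{y}=\mathbf{M}\mathbf{B}\mathbf{w}$ equals $\mathbf{M}(\mathbf{B}\mathbf{w})$. Applying this with $\mathbf{M}=\mathbf{\Delta}_1$ and $\mathbf{B}=\mathbf{X}_K$ yields $\Ran(\mathbf{\Delta}\overline{\mathbf{X}}_K)=\Ran(\mathbf{\Delta}_1\mathbf{X}_K)\subseteq\Ran(\mathbf{\Delta}_1)$, as claimed.

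There is no real obstacle here. The only point needing care is matching the dimensions in the partition: $\mathbf{\Delta}_1$ is $(N+S)\times N$ and multiplies the $N\times K$ block $\mathbf{X}_K$, so that the zero block of $\overline{\mathbf{X}}_K$ lines up exactly with $\mathbf{\Delta}_2$. As a remark, I would note that $\mathbf{\Delta}_1=\begin{bmatrix}\mathbf{K}\\ \mathbf{G}^\top\end{bmatrix}$. This motivates enlarging the projection subspace beyond $\Ran(\mathbf{\Delta}\overline{\mathbf{X}}_K)$ so that the information contained in $\mathbf{\Delta}_2$, and in particular $\mathbf{C}$, is captured.
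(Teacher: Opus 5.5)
Your proof is correct and matches the paper's argument: block multiplication with the zero-padded $\overline{\mathbf{X}}_K$ gives $\mathbf{\Delta}\overline{\mathbf{X}}_K=\mathbf{\Delta}_1\mathbf{X}_K$, whose columns are linear combinations of the columns of $\mathbf{\Delta}_1$. One harmless slip in a side remark that does not affect the argument: $\mathbf{\Delta}_2=\begin{bmatrix}\mathbf{G}\\ \mathbf{C}\end{bmatrix}$, so $\mathbf{G}$ is the upper block and $\mathbf{C}$ the lower one, not the reverse.
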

\begin{proof}
The matrix product 
between $\mathbf{\Delta}$ and $\overline{\mathbf{X}}_k$ can be written as:
\begin{equation}\label{mat1}
\begin{aligned}
\mathbf{\Delta}\overline{\mathbf{X}}_K &= \left[\mathbf{\Delta}_1,\mathbf{\Delta}_2\right]
\begin{bmatrix}
\mathbf{X}_K\\
\mathbf{0}_{S, K}
\end{bmatrix}= \mathbf{\Delta}_1\mathbf{X}_K.
\end{aligned}
\end{equation}
Thus, all columns of $\mathbf{\Delta}\overline{\mathbf{X}}_K$ are 
linear combinations of the columns of $\mathbf{\Delta}_1$.
\end{proof}

Proposition \ref{pro3} shows that 
$\mathbf{\Delta}\overline{\mathbf{X}}_K$ does not 
include any linear combinations involving the trailing 
$S$ columns of  $\mathbf{\Delta}$. In turn, 
this implies that the range \smash{$\Ran\left(\left[\mathbf{\overline{X}}_K,(\mathbf{I}-\mathbf{\overline{X}}_K\mathbf{\overline{X}}_K^\top)
\mathbf{\Delta}\mathbf{\overline{X}}_K\right]\right)$}
will always lack  information included in $(\mathbf{I}-\mathbf{\overline{X}}_K\mathbf{\overline{X}}_K^\top)\mathbf{\Delta}_2$. As a result, the latter subspace needs to 
be added explicitly to the RR projection subspace, that is:
\begin{equation}
    {\cal Z} = \Ran\left(\left[
        \overline{\mX}_K, (\mI - \overline{\mX}_K \overline{\mX}_K^\top) [\mDelta \overline{\mX}_K, \mDelta_2]
    \right]\right).
\end{equation}

\begin{table}
\centering
\caption{{Subspaces corresponding to different algorithms for approximating   $\mathbf{\widetilde{x}}_j$.}} \label{tab:subspaces}
\begin{tabular}{ l c c}
  \toprule
  \# Algorithm & ${\cal Z}$ \\
  \midrule
  1. TRIP-Basic &   $\Ran(\mathbf{\overline{X}}_K\mathbf{a}_j)$  \\
   2. TRIP &   $\Ran(\mathbf{\overline{X}}_K\mathbf{b}_j)$  \\
   3. Residual modes &   $\Ran\left(\left[\mathbf{\overline{X}}_K,(\mathbf{I}-\mathbf{\overline{X}}_K\mathbf{\overline{X}}_K^\top)\mathbf{\Delta}\mathbf{\overline{x}}_j\right]\mathbf{c}_j\right)$  \\
   4. Proposed  & $\Ran\left(\left[\mathbf{\overline{X}}_K,(\mathbf{I}-\mathbf{\overline{X}}_K\mathbf{\overline{X}}_K^\top)\left[\mathbf{\Delta}\mathbf{\overline{X}}_K,\mathbf{\Delta}_2\right]\right]\right)$\\
\bottomrule
\end{tabular}
\end{table}
Table \ref{tab:subspaces} summarizes the subspace from which different algorithms 
approximate the eigenvector $\widehat{\mathbf{x}}_j$ of  $\widehat{\mathbf{A}}$. Classical perturbation-based techniques approximate $\widehat{\mathbf{x}}_j$ from a one-dimensional subspace defined by a specific linear combination of the columns of the matrix $\mathbf{\overline{X}}_K$, and each separate eigenvector approximation requires a different linear combination. On the other hand, 
the proposed approach combined with the Rayleigh-Ritz procedure, approximates all $K$ sought eigenpairs of 
$\widehat{\mathbf{A}}$ from the same projection subspace and this subspace can be 
extended with information beyond $\mathbf{\overline{X}}_K$ as can be seen by the last 
entry in Table \ref{tab:subspaces}.

\subsection{Low rank approximation of expanded basis}

Augmenting the RR projection basis by $\left(\mathbf{I}-\mathbf{\overline{X}}_K\mathbf{\overline{X}}_K^\top\right)\mathbf{\Delta}_2$ 
can increase its dimension by $S$. For very large values of $S$ this can become 
impractical, since the cost of the RR projection scales cubically with respect to 
$S$. Nonetheless, as the next proposition shows, there exist scenarios where 
the rank of the matrix $\mathbf{\Delta}_2$ is much smaller than $S$, e.g., 
when the $S$ newly added vertices are sparsely connected with each other, and, at the 
same time, connected only to a few nodes of ${\cal G}$. 

\begin{proposition} \label{pro4}
Let ${\cal C}={\cal \widehat{V}} \setminus {\cal V}$ denote the $S$ 
newly added vertices in the graph ${\cal G}$, $J\in \mathbb{N}$ 
denote the number of vertices of ${\cal G}$ that are connected to 
${\cal C}$, and $Q\in \mathbb{N}$ denote the number of vertices of 
${\cal C}$ that are connected to ${\cal G}$. Then, $\Rank\left(\mathbf{\Delta}_2\right)\leq \min(J,Q)$. 
\end{proposition}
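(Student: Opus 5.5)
The plan is to bound the rank of $\mathbf{\Delta}_2$ by counting its nonzero rows and nonzero columns. From \eqref{eq:adj-update-single}, the trailing $S$ columns of $\mathbf{\Delta}$ are
\begin{equation*}
\mathbf{\Delta}_2=\begin{bmatrix}\mathbf{G}\\ \mathbf{C}\end{bmatrix}\in\mathbb{R}^{(N+S)\times S}.
\end{equation*}
The argument rests on a standard fact: if a matrix has at most $r$ nonzero rows and at most $c$ nonzero columns, its rank is at most $\min(r,c)$. This holds because deleting zero rows and zero columns leaves the rank unchanged, and the remaining submatrix has size $r\times c$.

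The first step applies this fact to the block $\mathbf{G}$. By definition, $[\mathbf{G}]_{ij}=1$ if and only if vertex $i$ of ${\cal V}$ is connected to vertex $j$ of ${\cal C}$.
\begin{itemize}
\item Row $i$ of $\mathbf{G}$ is nonzero only if $i$ is one of the $J$ old vertices adjacent to ${\cal C}$.
\item Column $j$ of $\mathbf{G}$ is nonzero only if $j$ is one of the $Q$ new vertices adjacent to ${\cal G}$.
\end{itemize}
Hence $\mathbf{G}$ has at most $J$ nonzero rows and at most $Q$ nonzero columns. Permuting rows and columns, $\mathbf{G}$ reduces to a $J\times Q$ submatrix padded with zeros, which gives $\Rank(\mathbf{G})\le\min(J,Q)$.

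The second step, which I expect to be the main obstacle, is passing from $\mathbf{G}$ to the stacked matrix $[\mathbf{G};\mathbf{C}]$. The difficulty is that a new vertex with no neighbor in ${\cal G}$ can still have edges inside ${\cal C}$. The corresponding column of $\mathbf{\Delta}_2$ is then nonzero through the $\mathbf{C}$ block, so the count of nonzero columns is not controlled by $Q$ alone.

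I would first handle the setting the proposition seems to have in mind, namely that the new vertices are sparsely connected with each other. If every new vertex that carries an edge inside ${\cal C}$ is also adjacent to ${\cal G}$, then every nonzero column of $\mathbf{\Delta}_2$ is one of the $Q$ columns above, so $\Rank(\mathbf{\Delta}_2)\le Q$. The bound by $J$ is more delicate. I would use subadditivity,
\begin{equation*}
\Rank(\mathbf{\Delta}_2)\le\Rank\left(\begin{bmatrix}\mathbf{G}\\ \mathbf{0}\end{bmatrix}\right)+\Rank\left(\begin{bmatrix}\mathbf{0}\\ \mathbf{C}\end{bmatrix}\right)\le\min(J,Q)+\Rank(\mathbf{C}).
\end{equation*}
This yields the stated bound exactly when $\mathbf{C}=\mathbf{0}$, or up to the additive term $\Rank(\mathbf{C})$ in general. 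I would state that assumption explicitly and verify it on the example of Fig.~\ref{fig:graph-evolution}, where $\mathbf{C}=\mathbf{0}$.

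If the intended reading is instead that only the coupling to existing nodes matters, then the relevant quantity is the $\mathbf{G}$-part of the projected block $(\mathbf{I}-\overline{\mathbf{X}}_K\overline{\mathbf{X}}_K^\top)\mathbf{\Delta}_2$. In that case the first step alone gives the bound $\min(J,Q)$, because multiplying by a matrix cannot increase rank.
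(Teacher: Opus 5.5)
There is a genuine gap, and it sits exactly at the step you flagged. Your counting lemma (rank is at most the number of nonzero rows and at most the number of nonzero columns) is the same tool the paper uses. But your proposal never reaches the stated bound: it ends with the special case $\mathbf{C}=\mathbf{0}$ and the weaker estimate $\min(J,Q)+\Rank(\mathbf{C})$.

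\textbf{What you are missing.} The paper applies the count to all of $\mathbf{\Delta}_2=[\mathbf{G};\mathbf{C}]$ at once, using that $[\mathbf{\Delta}_2]_{ij}\neq 0$ if and only if the $i$th vertex is adjacent to the $j$th new vertex. Since $\mathbf{\Delta}_2$ has $N+S$ rows, this holds for the $\mathbf{C}$ rows as well as the $\mathbf{G}$ rows only when vertices are indexed over the expanded set $\widehat{\mathcal{V}}$. The statement itself supports this, calling $\mathcal{C}$ the ``newly added vertices in the graph $\mathcal{G}$''. So $J$ counts every vertex, old or new, with a neighbor in $\mathcal{C}$, and $Q$ counts every new vertex with at least one neighbor. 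A new vertex whose only edges lie inside $\mathcal{C}$ then contributes one of the $J$ rows and one of the $Q$ columns. Under this reading your obstacle disappears, and your first step applied directly to $\mathbf{\Delta}_2$ rather than to $\mathbf{G}$ is the whole proof. The paper's one-line argument is valid only under this reading.

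\textbf{Under your reading the statement is false.} If $J$ counts only vertices of $\mathcal{V}$ and $Q$ only new vertices with a neighbor in $\mathcal{V}$, your suspicion is right, and the correct conclusion is that the statement fails, not that it is merely delicate.
\begin{itemize}
\item Two new vertices joined only to each other give $J=Q=0$, but $\Rank(\mathbf{\Delta}_2)=\Rank(\mathbf{C})=2$.
\item Even under your intermediate hypothesis that every new vertex with an internal edge also touches $\mathcal{V}$, the $J$-bound fails. Take an old vertex $v$ and new vertices $a,b$ with edges $va$, $vb$, $ab$. Then $J=1$ and $Q=2$, while the columns of $\mathbf{\Delta}_2$ are $\mathbf{e}_v+\mathbf{e}_b$ and $\mathbf{e}_v+\mathbf{e}_a$, so the rank is $2$. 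In the expanded reading $J=3$ and $Q=2$, which is consistent with the proposition.
\end{itemize}
So you should either define $J$ and $Q$ over $\widehat{\mathcal{V}}$ and give the one-line proof, or exhibit such a counterexample. Adding the hypothesis $\mathbf{C}=\mathbf{0}$ proves a different statement.

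\textbf{Your last paragraph.} Projecting does not rescue the original quantity. The operator $\mathbf{I}-\overline{\mathbf{X}}_K\overline{\mathbf{X}}_K^\top$ acts as the identity on the last $S$ coordinates, so the projected block is $[(\mathbf{I}-\mathbf{X}_K\mathbf{X}_K^\top)\mathbf{G};\mathbf{C}]$ and $\mathbf{C}$ survives unchanged. Bounding only its $\mathbf{G}$-part changes what is being bounded.
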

\begin{proof}
The rank of any matrix can not exceed its minimum number of non-zero 
rows and columns. Since $\left[\mathbf{\Delta}_2\right]_{i,j}$ 
is non-zero if and only if the $i$th vertex of ${\cal G}$ is connected 
to the $j$th vertex of the set ${\cal C}$, $\mathbf{\Delta}_2$ has 
at most $J$ non-zero rows and $Q$ non-zero columns. Therefore, a basis 
for the column space of $(\mathbf{I}-\mathbf{\overline{X}}_K\mathbf{\overline{X}}_K^\top)\mathbf{\Delta}_2$ 
can be readily computed from its $\min(J,Q)$ left singular vectors.
\end{proof}

\subsection{Decreasing complexity via Randomized SVD}
\label{ssec:RSVD}

When $(\mathbf{I} -\overline{\mX}_K \overline{\mX}_K^\top) \mathbf{\Delta}_2$ 
is not explicitly low-rank, an approximate basis of its column space can be 
computed via leveraging Randomized SVD (RSVD) \cite{martinsson2020randomized,halko2011finding}. Suppose we 
are interested in computing $L\in \mathbb{N}$ leading left singular vectors of 
$(\mathbf{I}- \mathbf{\overline{X}}_K\mathbf{\overline{X}}_K^\top)
\mathbf{\Delta}_2$. The steps of the RSVD procedure are outlined below: 
\begin{enumerate}[label=\textbf{S.\arabic*}]
    \item Given a so-called oversampling parameter $P$, the product $\mathbf{Y} = (\mathbf{I} - \mathbf{\overline{X}}_K \mathbf{\overline{X}}_K^\top) \mathbf{\Delta}_2 \mathbf{\Omega}$ is computed, where $\mathbf{\Omega}$ is a $S\times (L+P)$ matrix with i.i.d. $\mathcal{N}(0,1)$ Gaussian entries; \label{enumitem:rsvd:s1}
    \item The $L$ leading singular vectors of $\mathbf{M}^\top (\mathbf{I} -\mathbf{\overline{X}}_K \mathbf{\overline{X}}_K^\top) \mathbf{\Delta}_2 = \widehat{\mathbf{U}} \widehat{\mathbf{\Sigma}} \widehat{\mathbf{V}}^{\top}$ are computed, where $\mathbf{M}$ is an orthonormal basis of $\Ran(\mathbf{Y})$; \label{enumitem:rsvd:s2}
    \item The rank-$L$ approximation of $(\mathbf{I} -\mathbf{\overline{X}}_K \mathbf{\overline{X}}_K^\top) \mathbf{\Delta}_2$ is set as $\mathbf{M}\widehat{\mathbf{U}}\widehat{\mathbf{\Sigma}}\widehat{\mathbf{V}}^{\top}$; and
    \item Since $\mathbf{\widehat{U}}^\top\mathbf{M}^\top\mathbf{M}\mathbf{\widehat{U}}=\mathbf{I}$, the orthogonal matrix $\mathbf{R} = \mathbf{M}\widehat{\mathbf{U}}$ forms an approximation of the $L$ left leading singular vectors of $(\mathbf{I} - \mathbf{\overline{X}}_K \mathbf{\overline{X}}_K^\top) \mathbf{\Delta}_2$. \label{enumitem:rsvd:s4}
\end{enumerate}
Step \ref{enumitem:rsvd:s1} incurs a complexity of $\ O(2NK(L+P))$ while the formation of the basis $\mathbf{M}$ in step \ref{enumitem:rsvd:s2} incurs a complexity of $O(N(L+P)^2)$. Finally, forming the singular vector approximation $\mathbf{R}$ in step \ref{enumitem:rsvd:s4} incurs complexity $O(NL(L+P))$. Note, that if the rank of $\mathbf{\Delta}_2$ is less than $L+P$, then with probability one RSVD returns the range space $\Ran\left((\mathbf{I} -\mathbf{\overline{X}}_K \mathbf{\overline{X}}_K^\top) \mathbf{\Delta}_2\right)$ \cite{martinsson2020randomized}. Nevertheless, $L$ and $P$ can be user-defined and induce a trade-off between computational complexity and approximation accuracy, allowing practitioners to select them according to their accuracy and runtime requirements. This RSVD step can be especially beneficial when a large number of nodes are added to the graph, i.e., when $S$ is large.

\section{Dynamic graph updates}
\label{sec:main_graph_updates}

The previous section presented a subspace projection technique to update the $K$ leading eigenpairs of an adjacency matrix subject to vertex and edge modifications, for a single time-step. We now turn our attention to the problem of dynamic graph updating over $T\in \mathbb{N}$ time-steps starting from an initial graph ${\cal G}^{(0)}$ and updating $\{{\cal G}^{(t)}\}_{t=0}^{T}$ with $\calG^{(t)} = (\calV^{(t)}, \calE^{(t)})$ denoting the graph observed at time-step $t=0,\ldots,T$. The adjacency matrix associated with the graph ${\cal G}^{(t+1)}$ obeys the transition model outlined in 
\eqref{eq:adj_update}.

The proposed algorithm is tabulated in Alg.~\ref{alg:dynamic_graph_updates} 
and is abbreviated as G-REST (Graph Rayleigh-Ritz Eigenspace Tracking).
In the following, the leading $K$ eigenvalues of $\mA^{(t)}$ are collected in the $K\times K$ diagonal matrix $\mLambda^{(t)}$ and the corresponding eigenvectors are collected as columns of the $N^{(t)}\times K$ matrix $\mX_{K}^{(t)}$. The $K$ leading eigenpairs of adjacency $\mathbf{A}^{(0)}$ of the initial graph ${\cal G}^{(0)}$, $\{(\lambda_{i}^{(0)},\vx_{i}^{(0)})\}_{i=1}^{K}$, are computed using any direct eigendecomposition method. 
At time step $t+1$, the matrix update $\mDelta^{(t+1)}$ with $S^{(t+1)}$ new nodes is received, with ${\mDelta_2^{(t+1)}}$ denoting the trailing $S^{(t+1)}$ columns of $\mDelta^{(t+1)}$, and the eigenvector matrix $\mX_{K}^{(t)}$ from the time-step $t$ is padded with a $S^{(t+1)}\times K$ matrix of zero entries, i.e. 
$\overline{\mX}_{K}^{(t)} = \begin{bmatrix}
    \mX_{K}^{(t)} \\ \mathbf{0}_{S^{(t+1)},K}
\end{bmatrix}$. 

The key step during iteration $t+1$ is the selection of the projection subspace ${\cal Z}^{(t+1)}$ and the construction of the Rayleigh-Ritz projection basis $\mathbf{Z}^{(t+1)}$ in line \ref{keystep}. As mentioned in Sec.~\ref{sec:projections_main} and outlined in Table~\ref{tab:subspaces}, options for ${\cal Z}^{(t+1)}$ include $\Ran(\overline{\mX}_K^{(t)})$ from TRIP, \[\Ran\left(\left[\mathbf{\overline{X}}_K^{(t)},(\mathbf{I}-\mathbf{\overline{X}}_K^{(t)}{\mathbf{\overline{X}}_K^{(t)}}^\top)\mathbf{\Delta}^{(t+1)}\mathbf{\overline{X}}_K^{(t)}\right]\right)\] from Residual modes, and the proposed subspace of Sec.~\ref{ssec:proj_subspace} \[\Ran\left(\left[\mathbf{\overline{X}}_K^{(t)},(\mathbf{I}-\mathbf{\overline{X}}_K^{(t)}{\mathbf{\overline{X}}_K^{(t)}}^\top)\left[\mathbf{\Delta}^{(t+1)}\mathbf{\overline{X}}_K^{(t)},\mathbf{\Delta}_2^{(t+1)}\right]\right]\right).\] In all cases, the basis $\mathbf{Z}^{(t+1)}$ can be constructed via retaining the 
orthogonal matrix factor of the QR decomposition \cite{golub2013matrix}.  
Notice that, for the proposed subspace, an exact basis can be obtained via the QR decomposition on the 
matrix $(\mathbf{I}-\mathbf{\overline{X}}_{K}^{(t)}{\mathbf{{\overline{X}}}_{K}^{(t)}}^\top)\mDelta_2^{(t+1)}$, however this decomposition can be considerably expensive to obtain \cite{golub2013matrix}. To maintain tractable computational complexity, the RSVD procedure of Section \ref{ssec:RSVD} can be used 
to obtain an approximate basis $\mathbf{R}^{(t+1)}$ of $\mDelta_2^{(t+1)}$. Then, the computational process to obtain an orthonormal $\mathbf{Z}^{(t+1)}$ can be simplified due to the partially orthogonal structure of the subspace 
$\Ran\left(\left[\mathbf{\overline{X}}_{K}^{(t)}, \left(\mathbf{I}-{\mathbf{\overline{X}}}_{K}^{(t)}{\mathbf{{\overline{X}}}_{K}^{(t)}}^\top\right)\left[\mathbf{\Delta}^{(t+1)}\mathbf{\overline{X}}_{K}^{(t)},\mathbf{R}^{(t+1)}\right]\right]\right)$. Indeed, since the 
bottom $S^{(t+1)}\times K$ sub-matrix of $\mathbf{\overline{X}}_{K}^{(t)}$ is zero 
and $\mathbf{R^{(t+1)}}$ is orthogonal to $ \Ran\left(\mathbf{\overline{X}}_{K}^{(t)}\right)$, the orthogonalization procedure needs to be carried only for the matrix  \[\left[\left(\mathbf{I}-\mathbf{\overline{X}}_{K}^{(t)}{\mathbf{{\overline{X}}}_{K}^{(t)}}^\top\right)\left[\mathbf{\Delta}^{(t+1)}{\mathbf{\overline{X}}}_{K}^{(t)},\mathbf{R}^{(t+1)}\right]\right], \] which in turn requires $O((N^{(t)}+S^{(t+1)})(K+L)^2)$ floating-point operations.

Finally, the eigenpairs of $\mA^{(t+1)}$ can be approximated via the Rayleigh-Ritz procedure of Algorithm \ref{alg:rayleigh-ritz}, which requires the computation of the eigenvalue decomposition of the matrix:
\begin{equation}
\label{eq:RR_eigen_graph_upd_main2} {\mathbf{Z}^{(t+1)\top}}\overline{\mA}^{(t+1)}\mathbf{Z}^{(t+1)} + {\mathbf{Z}^{(t+1)\top}}\mDelta^{(t+1)}\mathbf{Z}^{(t+1)}.
\end{equation}
However, as it is not desirable to store $\mA^{(t)}$ in memory, $\overline{\mA}^{(t+1)}$  can be approximated via its truncated eigendecomposition, that is \eqref{eq:RR_eigen_graph_upd_main2} becomes:
\begin{equation}
\label{eq:RR_eigen_graph_upd_main3} {\mathbf{Z}^{(t+1)\top}}\overline{\mX}_{K}^{(t)}\mLambda_{K}^{(t)}{\overline{\mX}_{K}^{(t)}}^{\top}\mathbf{Z}^{(t+1)} + {\mathbf{Z}^{(t+1)\top}}\mDelta^{(t+1)}\mathbf{Z}^{(t+1)}.
\end{equation}
The latter renders the per-step memory complexity of Algorithm \ref{alg:dynamic_graph_updates} to $O((2N^{(t)} + L + 1)K + \rm{nnz}(\mDelta^{(t+1)}))$. The computational cost of the Rayleigh-Ritz eigenvalue decomposition runs at $O((K+L)^3)$.

\begin{algorithm} 
\caption{Graph Rayleigh-Ritz Eigenspace Tracking (\proposed).}
\label{alg:dynamic_graph_updates}
\begin{algorithmic}[1]
\State {\bf Input:} $\mathbf{A}^{(0)} \in \mathbb{R}^{N^{(0)}\times N^{(0)}},\ K \in \mathbb{N},\ L \in \mathbb{N}$
\State {\bf Output:} $(\mathbf{{X}}_{K}^{(T)},\mathbf{{\Lambda}}_{K}^{(T)})$
\State Compute the $K$ leading eigenpairs $(\mathbf{X}_{K}^{(0)},\mathbf{\Lambda}_{K}^{(0)})$ of $\mathbf{A}^{(0)}$
\State Set $t=0$
 \Do
    \State Receive matrix update $\mathbf{\Delta}^{(t+1)}$
    \State $\mathbf{\overline{X}}_{K}^{(t)} \gets [{\mathbf{X}_{K}^{(t)}}^{\top}, \mathbf{0}_{K,S^{(t+1)}}]^{\top}$
    \State Construct an orthonormal basis $\mathbf{Z}^{(t+1)}$. \label{keystep}
    \State Compute leading eigenpairs 
    $\{(\theta_{j}^{(t+1)},\mathbf{f}_{j}^{(t+1)})\}_{j=1}^{K}$ of \eqref{eq:RR_eigen_graph_upd_main3}
    via a direct eigenvalue solver.\label{keystep2}
    \State Set 
    $\mathbf{X}_{K}^{(t+1)} = \mathbf{Z}^{(t+1)}\mathbf{F}_{K}^{(t+1)}$ and 
    $\mathbf{\Lambda}_{K}^{(t+1)} = \mathbf{\Theta}_{K}^{(t+1)}$
    \State $t \gets t+1$.
 \doWhile{there exist graph updates}
\end{algorithmic}
\end{algorithm}

\subsection{Tracking matrix functions}
Consider a symmetric adjacency matrix $\mA$ with corresponding eigendecomposition $\mA = \mX\mLambda\mX^{\top}$. For a diagonalizable matrix, such as the adjacency, an analytic matrix function is defined as $h(\mA) = \mX h(\mLambda)\mX^{\top},$ where $h(\mLambda)$ applies the function $h$ on every diagonal element of $\mLambda.$ Examples of such functions include matrix polynomials, matrix exponentials $e^{\mA}$ and logarithms $\log(\mA)$ and matrix roots. Using the truncated eigendecomposition of the adjacency $\mA\approx \mX_K\mLambda_K\mX_K^{\top}$, the corresponding matrix function can be approximated as $h(\mA) \approx \hat{h}(\mA) = \mX_Kh(\mLambda_K)\mX_K^{\top}$. Thus the procedure outlined in Alg.~\ref{alg:dynamic_graph_updates} can be readily applied to approximate matrix functions of the adjacency $h(\mA^{(t)})$, as the graph evolves.

\subsection{Special case: Updating the Laplacian}
The aforementioned approach tackled updates to the eigenpairs of the adjacency matrix of an evolving graph. However,  in many cases the eigenpairs of the so-called Laplacian matrix are of interest. 
Consider the adjacency matrix $\mathbf{A}$ of a graph $\cal G$  and the corresponding diagonal degree matrix $\mathbf{D}$, which contains in its diagonal the degrees of every node in $\cal G$, i.e., $\mathbf{D} = {\rm diag}(\mathbf{A}\mathbf{1})$, with $\mathbf{1}$ denoting the $N\times 1$ all-ones vector. The \emph{Laplacian} matrix of $\cal G$ is defined as $\mathbf{L} = \mathbf{D} - \mathbf{A}.$
A simple application of Gershgorin's circle theorem shows that the eigenvalues of $\mathbf{L}$ lie in $[0,2d_{\rm max}],$ with $d_{\rm max}$ denoting the largest node degree. 

To apply Alg.~\ref{alg:dynamic_graph_updates} to the problem of tracking the eigenpairs of an evolving Laplacian matrix $\mathbf{L}^{(t)}$, consider the following shifted Laplacian matrix
\begin{equation}
    \mathbf{T}^{(t)} = \alpha^{(t)}\mathbf{I} - \mathbf{L}^{(t)}.
\end{equation}
With $(\nu_i^{(t)},\bm{\phi}_i^{(t)})$ denoting the $i$th eigenpair of $\mathbf{L}^{(t)}$,  $(\mu_i^{(t)},\bm{\phi}_i^{(t)})$ is an eigenpair of  $\mathbf{T}^{(t)}$ with $\mu_i^{(t)} = \alpha^{(t)} - \nu_i^{(t)}.$ Letting $\alpha^{(t)}  = 2d_{\rm max}^{(t)} $, the leading eigenpairs of $\mathbf{T}^{(t)}$ correspond to the trailing eigenpairs of $\mathbf{L}^{(t)}$. Let $\mathbf{\Delta}_{\mathbf{T}}^{(t)}$ denote the shifted Laplacian matrix updates. Then, Alg.~\ref{alg:dynamic_graph_updates} with $\mathbf{T}^{(t)}$ as input can update the eigenpairs of $\mathbf{T}^{(t)}$, and consequently of $\mathbf{L}^{(t)}.$

A similar procedure can be applied to the normalized Laplacian matrix  $\mathbf{L}_n^{(t)} = \mathbf{I} - \mathbf{D}^{-1/2}\mathbf{A}^{(t)}\mathbf{D}^{-1/2}$~\cite{smola_kondor_graphkernels_2003}. The eigenvalues of $\mathbf{L}_n^{(t)}$ lie in $[0,2]$; therefore, the eigenpairs of $\mathbf{T}_n^{(t)} = 2\mathbf{I} - \mathbf{L}_n^{(t)}$ exhibit the same behavior as the eigenpairs of $\mathbf{T}^{(t)}$. Specifically, the leading eigenpairs of $\mathbf{T}_n^{(t)}$ correspond to the trailing eigenpairs of $\mathbf{L}_n^{(t)}$.

\begin{remark}
    In principle, Alg.~\ref{alg:dynamic_graph_updates} can be adapted to track the trailing eigenpairs of an evolving matrix. However, this would lead to a formulation with significantly higher memory complexity, as step 9 of Alg.~\ref{alg:dynamic_graph_updates} would have to use $\mA^{(t)}$ instead of its rank-$K$ approximation.
\end{remark}

\section{Numerical Tests}
\label{sec:numerical-tests}

\begin{table}[t]
    \centering
    \caption{Datasets considered and their properties. Type S indicates the dataset is static, while D implies the dataset is dynamic.}
    \label{tab:dataset_details}
    \begin{tabular}{p{2cm}c@{\hskip 0.5cm}c@{\hskip 0.5cm}c}
    \toprule
    & \multicolumn{3}{c}{Graph Details} \\ \cmidrule{2-4}
    Dataset & Type & $|\calV|$ & $|{\cal E}|$ \\ \midrule
    Crocodile & S & $11,631$ & $170,773$ \\
    CM-Collab & S & $23,133$ & $93,439$ \\ 
    Epinions & S & $75,879$ & $405,740$ \\ 
    Twitch & S & $168,114$ & $6,797,557$ \\ \midrule
    MathOverflow & D & $24,818$ & $187,986$ \\
    Tech & D & $34,761$ & $107,720$ \\
    Enron & D & $87,273$ & $297,456$ \\
    AskUbuntu & D & $159,316$ & $455,691$ \\
    \bottomrule
    \end{tabular}
\end{table}

In this section, we evaluate the approach proposed in Alg.~\ref{alg:dynamic_graph_updates} on key metrics such as \emph{(i)} accuracy in estimating eigenvectors; \emph{(ii)} run time requirements, and; \emph{(iii)} sensitivity to selection of $L$ and $P$ parameters when RSVD is employed. Moreover, the proposed approach is assessed on two dynamic graph applications: \emph{(iv)} central node identification, and; \emph{(v)} clustering nodes of the graph. 

Throughout our numerical experiments, the proposed approach is denoted as \emph{\proposed} and three variants are considered for evaluation: $\proposed_2$, $\proposed_3$ and $\proposed_{\rm RSVD}$. $\proposed_2$ is Alg.~\ref{alg:dynamic_graph_updates} where the projection matrix $\mathbf{Z}^{(t+1)}$ is an orthonormal basis of $\Ran\left(\left[\mathbf{\overline{X}}_K,(\mathbf{I}-\mathbf{\overline{X}}_K\mathbf{\overline{X}}_K^\top)\mathbf{\Delta}\mathbf{\overline{X}}_K\right]\right)$, which is the same subspace used in the Residual Modes algorithm. $\proposed_3$ is Alg.~\ref{alg:dynamic_graph_updates} where the projection matrix is an orthonormal basis of the subspace defined in \ref{ssec:proj_subspace}. $\proposed_{\rm RSVD}$ uses the same projection subspace as $\proposed_3$, in conjunction with RSVD to reduce computational complexity.  The variants of \emph{\proposed} are compared against four baseline algorithms: \emph{TRIP}~\cite{chen2015fast} and Residual Modes (\emph{RM}) \cite{mitz2019symmetric}, as outlined in Sec.~\ref{ssec:perturbation}, the IASC algorithm of \cite{dhanjal2014efficient}, and \emph{TIMERS}~\cite{zhang2017timers}. IASC is another RR-based method where $\mathbf{Z}^{(t+1)}$ determined via $[\mathbf{X}_{K}^{(t)}, \mathbf{0}; \mathbf{0}, \mathbf{I}]$. TIMERS is a restarting algorithm that triggers a truncated eigendecomposition of $\mA^{(t)}$ whenever a proxy for the eigenvector approximation error exceeds a predefined threshold. Between restart events, TIMERS employs an eigenpair tracking algorithm to update the eigendecomposition. In the following, TIMERS uses IASC for eigenpair tracking.

Unless otherwise stated, the hyperparameter $\mu$ of RM is set to $0$, while the $L$ and $P$ parameters of $\proposed_{\rm RSVD}$ are both set to $100$. The $\theta$ parameter of TIMERS, that controls how often a truncated eigendecomposition is executed, is set to $0.01$. TIMERS is further modified to make sure there are at least $5$ time points between eigendecomposition executions, as it was observed that it can trigger eigendecompositions at all time points in some of our experiments. All tests are conducted on a high-performance computing server provided by Michigan State University. Resource allocation for each run is 8 CPU cores, with 32 GB RAM without any GPU requests. All tests were conducted in MATLAB R2023b~\cite{higham2016matlab} and represent the averages of $10$ Monte Carlo runs\footnote{Implementation of the proposed algorithms and experiment scripts can be accessed at https://github.com/abdkarr/grest.}.

\subsection{Eigenvector Estimation}
\label{ssec:eigs-est-results}

\begin{figure*}[t]
    \centering
    \hspace{2em} \ref{leg:exp1-eigs} \\[0.2em]
%
%
%
\begin{tikzpicture}
\begin{groupplot}[%
    group style={
        group size=4 by 1,
        horizontal sep=0.25cm,
        ylabels at=edge left,
        yticklabels at=edge left,
    },
    width=5.4cm,
    ymajorgrids,
    tick align=outside,
    tick label style={font=\scriptsize},
    tick pos=left,
    ymode=log,
    yticklabel style={xshift=0.5em, yshift=0.2em},
    xlabel={Eigenvector},
    ylabel={Angle $\psi$ (radians)},
    label style={font=\scriptsize},
    ylabel style={yshift=-0.5em},
    xlabel style={yshift=0.25em},
    legend style={
        anchor=center,
        at={(0.5, 0.5)},
        font=\scriptsize,
        legend columns=7,
        /tikz/every even column/.append style={
            column sep=1em,
        }
    },
    legend cell align={left},
    title style={yshift=-0.5em, font=\scriptsize},
    ymin=0.00001,
    ymax=0.3
]

\nextgroupplot[
    title={Crocodile},
    ybar=0.25pt,
    ymode=log,
    xmin=0.5,
    xmax=3.5,
    log origin=infty,
    bar width=4pt,
    xtick={1,2,3},
    xticklabels={{$\vx_1$},{$\vx_2$},{$\vx_3$}},
    after end axis/.code={
        \node[anchor=south west, font=\bfseries\footnotesize] at (rel axis cs:-0.15,1.02) {a.};
    },
]

\addplot[fill=T-Q-V1, draw=white, area legend] table[
    col sep=comma, x=Index, y=trip,
    restrict expr to domain={\coordindex}{0:2},
] {fig/data-final/exp1-lm/exp1_eigs.csv};

\addplot[fill=T-Q-V2, draw=white, area legend] table[
    col sep=comma, x=Index, y=residual-mode,
    restrict expr to domain={\coordindex}{0:2},
] {fig/data-final/exp1-lm/exp1_eigs.csv};

\addplot[fill=T-Q-MC7, draw=white, area legend] table[
    col sep=comma, x=Index, y=iasc,
    restrict expr to domain={\coordindex}{0:2},
] {fig/data-final/exp1-lm/exp1_eigs.csv};

\addplot[fill=T-Q-V6, draw=white, area legend] table[
    col sep=comma, x=Index, y=timers,
    restrict expr to domain={\coordindex}{0:2},
] {fig/data-final/exp1-lm/exp1_eigs.csv};

\addplot[fill=T-Q-V3, draw=white, area legend] table[
    col sep=comma, x=Index, y=grest-2,
    restrict expr to domain={\coordindex}{0:2},
] {fig/data-final/exp1-lm/exp1_eigs.csv};

\addplot[fill=T-Q-V4, draw=white, area legend] table[
    col sep=comma, x=Index, y=grest-3,
    restrict expr to domain={\coordindex}{0:2},
] {fig/data-final/exp1-lm/exp1_eigs.csv};

\addplot[fill=T-Q-V5, draw=white, area legend] table[
    col sep=comma, x=Index, y=grest-rsvd-100-100,
    restrict expr to domain={\coordindex}{0:2},
] {fig/data-final/exp1-lm/exp1_eigs.csv};

\nextgroupplot[
    title={CM-Collab},
    ybar=0.25pt,
    ymode=log,
    xmin=0.5,
    xmax=3.5,
    log origin=infty,
    bar width=4pt,
    xtick={1,2,3},
    xticklabels={{$\vx_1$},{$\vx_2$},{$\vx_3$}},
]

\addplot[fill=T-Q-V1, draw=white, area legend] table[
    col sep=comma, x=Index, y=trip,
    restrict expr to domain={\coordindex}{3:5},
] {fig/data-final/exp1-lm/exp1_eigs.csv};

\addplot[fill=T-Q-V2, draw=white, area legend] table[
    col sep=comma, x=Index, y=residual-mode,
    restrict expr to domain={\coordindex}{3:5},
] {fig/data-final/exp1-lm/exp1_eigs.csv};

\addplot[fill=T-Q-MC7, draw=white, area legend] table[
    col sep=comma, x=Index, y=iasc,
    restrict expr to domain={\coordindex}{3:5},
] {fig/data-final/exp1-lm/exp1_eigs.csv};

\addplot[fill=T-Q-V6, draw=white, area legend] table[
    col sep=comma, x=Index, y=timers,
    restrict expr to domain={\coordindex}{3:5},
] {fig/data-final/exp1-lm/exp1_eigs.csv};

\addplot[fill=T-Q-V3, draw=white, area legend] table[
    col sep=comma, x=Index, y=grest-2,
    restrict expr to domain={\coordindex}{3:5},
] {fig/data-final/exp1-lm/exp1_eigs.csv};

\addplot[fill=T-Q-V4, draw=white, area legend] table[
    col sep=comma, x=Index, y=grest-3,
    restrict expr to domain={\coordindex}{3:5},
] {fig/data-final/exp1-lm/exp1_eigs.csv};

\addplot[fill=T-Q-V5, draw=white, area legend] table[
    col sep=comma, x=Index, y=grest-rsvd-100-100,
    restrict expr to domain={\coordindex}{3:5},
] {fig/data-final/exp1-lm/exp1_eigs.csv};

\nextgroupplot[
    title={Epinions},
    ybar=0.25pt,
    ymode=log,
    xmin=0.5,
    xmax=3.5,
    log origin=infty,
    bar width=4pt,
    xtick={1,2,3},
    xticklabels={{$\vx_1$},{$\vx_2$},{$\vx_3$}},
    legend to name=leg:exp1-eigs,
]

\addplot[fill=T-Q-V1, draw=white, area legend] table[
    col sep=comma, x=Index, y=trip,
    restrict expr to domain={\coordindex}{6:8},
] {fig/data-final/exp1-lm/exp1_eigs.csv};
\addlegendentry{TRIP}

\addplot[fill=T-Q-V2, draw=white, area legend] table[
    col sep=comma, x=Index, y=residual-mode,
    restrict expr to domain={\coordindex}{6:8},
] {fig/data-final/exp1-lm/exp1_eigs.csv};
\addlegendentry{RM}

\addplot[fill=T-Q-MC7, draw=white, area legend] table[
    col sep=comma, x=Index, y=iasc,
    restrict expr to domain={\coordindex}{6:8},
] {fig/data-final/exp1-lm/exp1_eigs.csv};
\addlegendentry{IASC}

\addplot[fill=T-Q-V6, draw=white, area legend] table[
    col sep=comma, x=Index, y=timers,
    restrict expr to domain={\coordindex}{6:8},
] {fig/data-final/exp1-lm/exp1_eigs.csv};
\addlegendentry{TIMERS}

\addplot[fill=T-Q-V3, draw=white, area legend] table[
    col sep=comma, x=Index, y=grest-2,
    restrict expr to domain={\coordindex}{6:8},
] {fig/data-final/exp1-lm/exp1_eigs.csv};
\addlegendentry{$\proposed_2$}

\addplot[fill=T-Q-V4, draw=white, area legend] table[
    col sep=comma, x=Index, y=grest-3,
    restrict expr to domain={\coordindex}{6:8},
] {fig/data-final/exp1-lm/exp1_eigs.csv};
\addlegendentry{$\proposed_3$}

\addplot[fill=T-Q-V5, draw=white, area legend] table[
    col sep=comma, x=Index, y=grest-rsvd-100-100,
    restrict expr to domain={\coordindex}{6:8},
] {fig/data-final/exp1-lm/exp1_eigs.csv};
\addlegendentry{$\proposed_{\rm RSVD}$}

\nextgroupplot[
    title={Twitch},
    ybar=0.25pt,
    ymode=log,
    xmin=0.5,
    xmax=3.5,
    log origin=infty,
    bar width=4pt,
    xtick={1,2,3},
    xticklabels={{$\vx_1$},{$\vx_2$},{$\vx_3$}},
]

\addplot[fill=T-Q-V1, draw=white, area legend] table[
    col sep=comma, x=Index, y=trip,
    restrict expr to domain={\coordindex}{9:11},
] {fig/data-final/exp1-lm/exp1_eigs.csv};

\addplot[fill=T-Q-V2, draw=white, area legend] table[
    col sep=comma, x=Index, y=residual-mode,
    restrict expr to domain={\coordindex}{9:11},
] {fig/data-final/exp1-lm/exp1_eigs.csv};

\addplot[fill=T-Q-MC7, draw=white, area legend] table[
    col sep=comma, x=Index, y=iasc,
    restrict expr to domain={\coordindex}{9:11},
] {fig/data-final/exp1-lm/exp1_eigs.csv};

\addplot[fill=T-Q-V6, draw=white, area legend] table[
    col sep=comma, x=Index, y=timers,
    restrict expr to domain={\coordindex}{9:11},
] {fig/data-final/exp1-lm/exp1_eigs.csv};

\addplot[fill=T-Q-V3, draw=white, area legend] table[
    col sep=comma, x=Index, y=grest-2,
    restrict expr to domain={\coordindex}{9:11},
] {fig/data-final/exp1-lm/exp1_eigs.csv};

\addplot[fill=T-Q-V4, draw=white, area legend] table[
    col sep=comma, x=Index, y=grest-3,
    restrict expr to domain={\coordindex}{9:11},
] {fig/data-final/exp1-lm/exp1_eigs.csv};

\addplot[fill=T-Q-V5, draw=white, area legend] table[
    col sep=comma, x=Index, y=grest-rsvd-100-100,
    restrict expr to domain={\coordindex}{9:11},
] {fig/data-final/exp1-lm/exp1_eigs.csv};

\end{groupplot}
\end{tikzpicture}%
    
    \hspace{2em} \ref{leg:exp1-eigavg} \\[0.2em]
%
%
%
\begin{tikzpicture}
\begin{groupplot}[%
    group style={
        group size=4 by 1,
        horizontal sep=0.25cm,
        ylabels at=edge left,
        yticklabels at=edge left,
    },
    width=5.4cm,
    xmajorgrids,
    ymajorgrids,
    tick align=outside,
    tick label style={font=\scriptsize},
    tick pos=left,
    ymode=log,
    yticklabel style={xshift=0.5em, yshift=0.2em},
    xlabel={Time},
    ylabel={Angle $\psi$ (radians)},
    label style={font=\scriptsize},
    ylabel style={yshift=-0.5em},
    xlabel style={yshift=0.25em},
    legend style={
        anchor=center,
        at={(0.5, 0.5)},
        font=\scriptsize,
        legend columns=7,
        /tikz/every even column/.append style={
            column sep=1em,
        }
    },
    legend cell align={left},
    title style={yshift=-0.5em, font=\scriptsize},
    enlarge x limits=0.05,
    ymin=0.00005,
    ymax=1,
]

\nextgroupplot[
    title={Crocodile},
    xtick distance=2,
    mark repeat=2,
    mark phase=2,
    xmin=1, xmax=10,
    after end axis/.code={
        \node[anchor=south west, font=\bfseries\footnotesize] at (rel axis cs:-0.15,1.05) {b.};
    },
]

\addplot[color=T-Q-V1, mark=*, MyLinePlot, densely dotted] table[
    col sep=comma, x=TimeStep, y=trip,
    restrict expr to domain={\coordindex}{0:9},
] {fig/data-final/exp1-lm/exp1_subspace_eig.csv};

\addplot[color=T-Q-V2, mark=square*, MyLinePlot, densely dotted] table[
    col sep=comma, x=TimeStep, y=residual-mode,
    restrict expr to domain={\coordindex}{0:9},
] {fig/data-final/exp1-lm/exp1_subspace_eig.csv};

\addplot[color=T-Q-MC7, mark=triangle*, MyLinePlot, densely dotted] table[
    col sep=comma, x=TimeStep, y=iasc,
    restrict expr to domain={\coordindex}{0:9},
] {fig/data-final/exp1-lm/exp1_subspace_eig.csv};

\addplot[color=T-Q-V6, mark=diamond*, MyLinePlot, densely dotted, only marks, mark repeat=1, mark phase=0] table[
    col sep=comma, x=TimeStep, y=timers,
    restrict expr to domain={\coordindex}{0:9},
] {fig/data-final/exp1-lm/exp1_subspace_eig.csv};

\addplot[color=T-Q-V3, mark=*, MyLinePlot] table[
    col sep=comma, x=TimeStep, y=grest-2,
    restrict expr to domain={\coordindex}{0:9},
] {fig/data-final/exp1-lm/exp1_subspace_eig.csv};

\addplot[color=T-Q-V4, mark=*, MyLinePlot] table[
    col sep=comma, x=TimeStep, y=grest-3,
    restrict expr to domain={\coordindex}{0:9},
] {fig/data-final/exp1-lm/exp1_subspace_eig.csv};

\addplot[color=T-Q-V5, mark=triangle*, MyLinePlot] table[
    col sep=comma, x=TimeStep, y=grest-rsvd-100-100,
    restrict expr to domain={\coordindex}{0:9},
] {fig/data-final/exp1-lm/exp1_subspace_eig.csv};

\nextgroupplot[
    title={CM-Collab},
    xtick distance=4,
    mark repeat=4,
    mark phase=4,
    xmin=1, xmax=20,
]

\addplot[color=T-Q-V1, mark=*, MyLinePlot, densely dotted] table[
    col sep=comma, x=TimeStep, y=trip,
    restrict expr to domain={\coordindex}{10:29},
] {fig/data-final/exp1-lm/exp1_subspace_eig.csv};

\addplot[color=T-Q-V2, mark=square*, MyLinePlot, densely dotted] table[
    col sep=comma, x=TimeStep, y=residual-mode,
    restrict expr to domain={\coordindex}{10:29},
] {fig/data-final/exp1-lm/exp1_subspace_eig.csv};

\addplot[color=T-Q-MC7, mark=triangle*, MyLinePlot, densely dotted] table[
    col sep=comma, x=TimeStep, y=iasc,
    restrict expr to domain={\coordindex}{10:29},
] {fig/data-final/exp1-lm/exp1_subspace_eig.csv};

\addplot[color=T-Q-V6, mark=diamond*, MyLinePlot, densely dotted, only marks, mark repeat=1, mark phase=0] table[
    col sep=comma, x=TimeStep, y=timers,
    restrict expr to domain={\coordindex}{10:29},
] {fig/data-final/exp1-lm/exp1_subspace_eig.csv};

\addplot[color=T-Q-V3, mark=*, MyLinePlot] table[
    col sep=comma, x=TimeStep, y=grest-2,
    restrict expr to domain={\coordindex}{10:29},
] {fig/data-final/exp1-lm/exp1_subspace_eig.csv};

\addplot[color=T-Q-V4, mark=*, MyLinePlot] table[
    col sep=comma, x=TimeStep, y=grest-3,
    restrict expr to domain={\coordindex}{10:29},
] {fig/data-final/exp1-lm/exp1_subspace_eig.csv};

\addplot[color=T-Q-V5, mark=triangle*, MyLinePlot] table[
    col sep=comma, x=TimeStep, y=grest-rsvd-100-100,
    restrict expr to domain={\coordindex}{10:29},
] {fig/data-final/exp1-lm/exp1_subspace_eig.csv};

\nextgroupplot[
    title={Epinions},
    xtick distance=10,
    mark repeat=10,
    mark phase=10,
    xmin=1, xmax=50,
    legend to name=leg:exp1-eigavg,
]

\addplot[color=T-Q-V1, mark=*, MyLinePlot, densely dotted] table[
    col sep=comma, x=TimeStep, y=trip,
    restrict expr to domain={\coordindex}{30:79},
] {fig/data-final/exp1-lm/exp1_subspace_eig.csv};
\addlegendentry{TRIP}

\addplot[color=T-Q-V2, mark=square*, MyLinePlot, densely dotted] table[
    col sep=comma, x=TimeStep, y=residual-mode,
    restrict expr to domain={\coordindex}{30:79},
] {fig/data-final/exp1-lm/exp1_subspace_eig.csv};
\addlegendentry{RM}

\addplot[color=T-Q-MC7, mark=triangle*, MyLinePlot, densely dotted] table[
    col sep=comma, x=TimeStep, y=iasc,
    restrict expr to domain={\coordindex}{30:79},
] {fig/data-final/exp1-lm/exp1_subspace_eig.csv};
\addlegendentry{IASC}

\addplot[color=T-Q-V6, mark=diamond*, MyLinePlot, densely dotted, only marks, mark repeat=2, mark phase=0] table[
    col sep=comma, x=TimeStep, y=timers,
    restrict expr to domain={\coordindex}{30:79},
] {fig/data-final/exp1-lm/exp1_subspace_eig.csv};
\addlegendentry{TIMERS}

\addplot[color=T-Q-V3, mark=*, MyLinePlot] table[
    col sep=comma, x=TimeStep, y=grest-2,
    restrict expr to domain={\coordindex}{30:79},
] {fig/data-final/exp1-lm/exp1_subspace_eig.csv};
\addlegendentry{$\proposed_{2}$}

\addplot[color=T-Q-V4, mark=*, MyLinePlot] table[
    col sep=comma, x=TimeStep, y=grest-3,
    restrict expr to domain={\coordindex}{30:79},
] {fig/data-final/exp1-lm/exp1_subspace_eig.csv};
\addlegendentry{$\proposed_{3}$}

\addplot[color=T-Q-V5, mark=triangle*, MyLinePlot] table[
    col sep=comma, x=TimeStep, y=grest-rsvd-100-100,
    restrict expr to domain={\coordindex}{30:79},
] {fig/data-final/exp1-lm/exp1_subspace_eig.csv};
\addlegendentry{$\proposed_{\rm RSVD}$}

\nextgroupplot[
    title={Twitch},
    xtick distance=20,
    mark repeat=20,
    mark phase=20,
    xmin=1, xmax=100,
]

\addplot[color=T-Q-V1, mark=*, MyLinePlot, densely dotted] table[
    col sep=comma, x=TimeStep, y=trip,
    restrict expr to domain={\coordindex}{80:179},
] {fig/data-final/exp1-lm/exp1_subspace_eig.csv};

\addplot[color=T-Q-V2, mark=square*, MyLinePlot, densely dotted] table[
    col sep=comma, x=TimeStep, y=residual-mode,
    restrict expr to domain={\coordindex}{80:179},
] {fig/data-final/exp1-lm/exp1_subspace_eig.csv};

\addplot[color=T-Q-MC7, mark=triangle*, MyLinePlot, densely dotted] table[
    col sep=comma, x=TimeStep, y=iasc,
    restrict expr to domain={\coordindex}{80:179},
] {fig/data-final/exp1-lm/exp1_subspace_eig.csv};

\addplot[color=T-Q-V3, mark=*, MyLinePlot] table[
    col sep=comma, x=TimeStep, y=grest-2,
    restrict expr to domain={\coordindex}{80:179},
] {fig/data-final/exp1-lm/exp1_subspace_eig.csv};

\addplot[color=T-Q-V4, mark=*, MyLinePlot] table[
    col sep=comma, x=TimeStep, y=grest-3,
    restrict expr to domain={\coordindex}{80:179},
] {fig/data-final/exp1-lm/exp1_subspace_eig.csv};

\addplot[color=T-Q-V5, mark=triangle*, MyLinePlot] table[
    col sep=comma, x=TimeStep, y=grest-rsvd-100-100,
    restrict expr to domain={\coordindex}{80:179},
] {fig/data-final/exp1-lm/exp1_subspace_eig.csv};

\end{groupplot}
\end{tikzpicture}%
    \vspace{-1em}
    \caption{Eigenvector approximation results for dynamic graphs constructed from static graphs (Scenario 1). (a). shows the average of $\psi_{i,t}$ over time for the first three leading eigenvectors. (b). shows the average of $\psi_{i,t}$ over the first $32$ leading eigenvectors as a function of $t$. Results for TIMERS are not reported for the Twitch dataset due to its high time requirement.}
    \label{fig:eig-approx-case1}
\end{figure*}
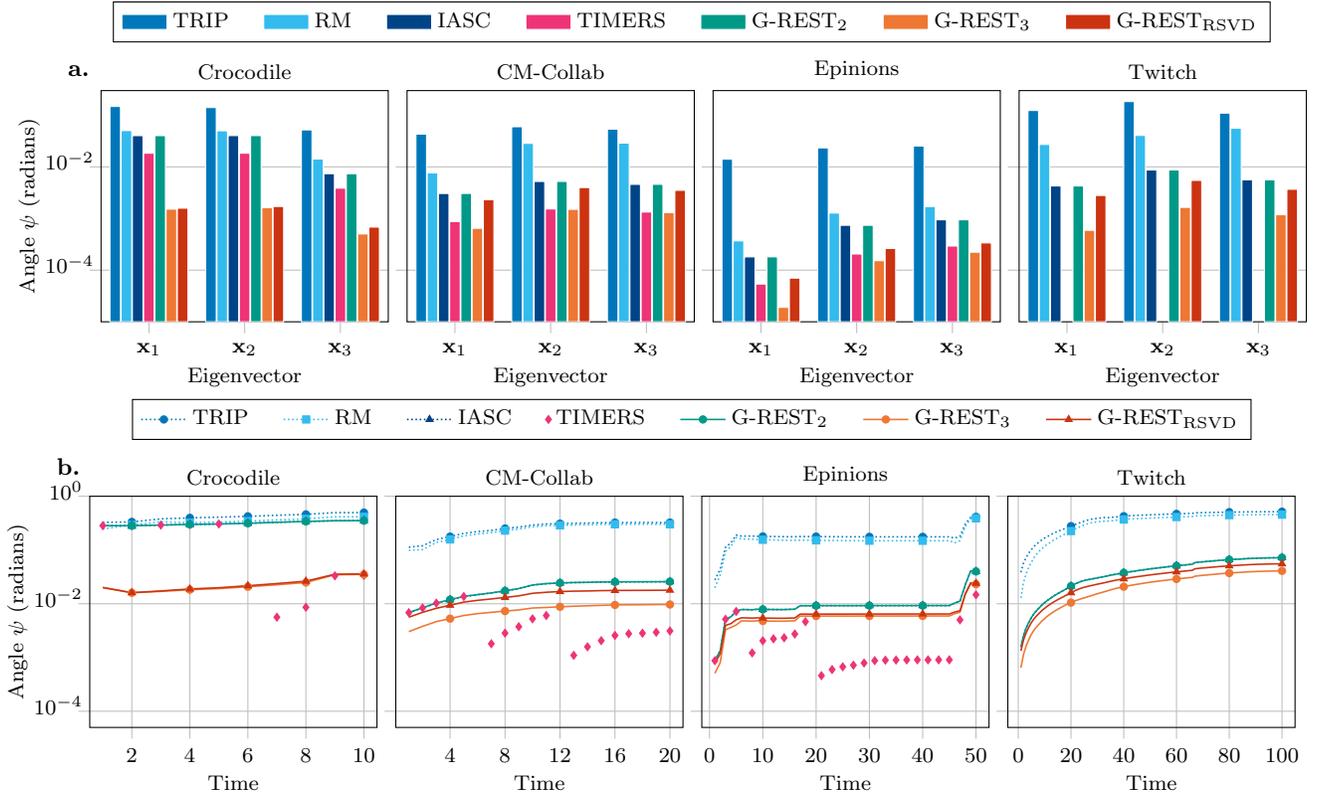

The first set of numerical experiments evaluates the performance of the compared algorithms in terms of eigenvector estimation for various dynamic graphs. For a dynamic graph $\{\calG^{(t)}\}_{t=0}^T$, let $\vx_{i}^{(t)}$ be the reference $i$th leading eigenvector of $\mA^{(t)}$ computed via MATLAB's truncated eigendecomposition function \verb|eigs| and let $\tilde{\vx}_{i}^{(t)}$ be its estimate returned by one of the  tracking algorithms for $t>0$. The figure of merit is the angle between $\vx_{i}^{(t)}$ and $\tilde{\vx}_{i}^{(t)}$ measured as:
\begin{equation}
    \psi_{i}^{(t)} = \arccos \left( 
        | {\vx_{i}^{(t)}}^{\top} \widetilde{\vx}_{i}^{(t)}| 
    \right) \in [0,\pi],
\end{equation}
where both $\vx_{i}^{(t)}$ and $\tilde{\vx}_{i}^{(t)}$ are normalized to have unit norm. Smaller values of $\psi_{i}^{(t)}$ indicate that $\vx_{i}^{(t)}$ and its estimate $\widetilde{\vx}_{i}^{(t)}$ are close. In the following, the number of tracked eigenpairs is set to $K=64$ for all algorithms and datasets.

\begin{figure*}[t]
    \centering
    \hspace{2em} \ref{leg:exp2-eigs} \\[0.2em]
%
%
%
\begin{tikzpicture}
\begin{groupplot}[%
    group style={
        group size=4 by 1,
        horizontal sep=0.25cm,
        ylabels at=edge left,
        yticklabels at=edge left,
    },
    width=5.4cm,
    ymajorgrids,
    tick align=outside,
    tick label style={font=\scriptsize},
    tick pos=left,
    yticklabel style={xshift=0.5em, yshift=0.2em},
    xlabel={Eigenvector},
    ylabel={Angle $\psi$ (radians)},
    label style={font=\scriptsize},
    ylabel style={yshift=-0.5em},
    xlabel style={yshift=0.25em},
    legend style={
        anchor=center,
        at={(0.5, 0.5)},
        font=\scriptsize,
        legend columns=7,
        /tikz/every even column/.append style={
            column sep=1em,
            draw=white,
        },
    },
    legend cell align={left},
    title style={yshift=-0.5em, font=\scriptsize},
    ymin=0.005,
    ymax=1
]

\nextgroupplot[
    title={MathOverflow},
    ybar=0.25pt,
    ymode=log,
    xmin=0.5,
    xmax=3.5,
    log origin=infty,
    bar width=4pt,
    xtick={1,2,3},
    xticklabels={{$\vx_1$},{$\vx_2$},{$\vx_3$}},
    after end axis/.code={
        \node[anchor=south west, font=\bfseries\footnotesize] at (rel axis cs:-0.15,1.02) {a.};
    },
]

\addplot[fill=T-Q-V1, draw=white, area legend] table[
    col sep=comma, x=Index, y=trip,
    restrict expr to domain={\coordindex}{0:2},
] {fig/data-final/exp2-lm/exp2_eigs.csv};

\addplot[fill=T-Q-V2, draw=white, area legend] table[
    col sep=comma, x=Index, y=residual-mode,
    restrict expr to domain={\coordindex}{0:2},
] {fig/data-final/exp2-lm/exp2_eigs.csv};

\addplot[fill=T-Q-MC7, draw=white, area legend] table[
    col sep=comma, x=Index, y=iasc,
    restrict expr to domain={\coordindex}{0:2},
] {fig/data-final/exp2-lm/exp2_eigs.csv};

\addplot[fill=T-Q-V6, draw=white, area legend] table[
    col sep=comma, x=Index, y=timers,
    restrict expr to domain={\coordindex}{0:2},
] {fig/data-final/exp2-lm/exp2_eigs.csv};

\addplot[fill=T-Q-V3, draw=white, area legend] table[
    col sep=comma, x=Index, y=grest-2,
    restrict expr to domain={\coordindex}{0:2},
] {fig/data-final/exp2-lm/exp2_eigs.csv};

\addplot[fill=T-Q-V4, draw=white, area legend] table[
    col sep=comma, x=Index, y=grest-3,
    restrict expr to domain={\coordindex}{0:2},
] {fig/data-final/exp2-lm/exp2_eigs.csv};

\addplot[fill=T-Q-V5, draw=white, area legend] table[
    col sep=comma, x=Index, y=grest-rsvd-100-100,
    restrict expr to domain={\coordindex}{0:2},
] {fig/data-final/exp2-lm/exp2_eigs.csv};

\nextgroupplot[
    title={Tech},
    ybar=0.25pt,
    ymode=log,
    xmin=0.5,
    xmax=3.5,
    log origin=infty,
    bar width=4pt,
    xtick={1,2,3},
    xticklabels={{$\vx_1$},{$\vx_2$},{$\vx_3$}},
]

\addplot[fill=T-Q-V1, draw=white, area legend] table[
    col sep=comma, x=Index, y=trip,
    restrict expr to domain={\coordindex}{3:5},
] {fig/data-final/exp2-lm/exp2_eigs.csv};

\addplot[fill=T-Q-V2, draw=white, area legend] table[
    col sep=comma, x=Index, y=residual-mode,
    restrict expr to domain={\coordindex}{3:5},
] {fig/data-final/exp2-lm/exp2_eigs.csv};

\addplot[fill=T-Q-MC7, draw=white, area legend] table[
    col sep=comma, x=Index, y=iasc,
    restrict expr to domain={\coordindex}{3:5},
] {fig/data-final/exp2-lm/exp2_eigs.csv};

\addplot[fill=T-Q-V6, draw=white, area legend] table[
    col sep=comma, x=Index, y=timers,
    restrict expr to domain={\coordindex}{3:5},
] {fig/data-final/exp2-lm/exp2_eigs.csv};

\addplot[fill=T-Q-V3, draw=white, area legend] table[
    col sep=comma, x=Index, y=grest-2,
    restrict expr to domain={\coordindex}{3:5},
] {fig/data-final/exp2-lm/exp2_eigs.csv};

\addplot[fill=T-Q-V4, draw=white, area legend] table[
    col sep=comma, x=Index, y=grest-3,
    restrict expr to domain={\coordindex}{3:5},
] {fig/data-final/exp2-lm/exp2_eigs.csv};

\addplot[fill=T-Q-V5, draw=white, area legend] table[
    col sep=comma, x=Index, y=grest-rsvd-100-100,
    restrict expr to domain={\coordindex}{3:5},
] {fig/data-final/exp2-lm/exp2_eigs.csv};

\nextgroupplot[
    title={Enron},
    ybar=0.25pt,
    ymode=log,
    xmin=0.5,
    xmax=3.5,
    log origin=infty,
    bar width=4pt,
    xtick={1,2,3},
    xticklabels={{$\vx_1$},{$\vx_2$},{$\vx_3$}},
]

\addplot[fill=T-Q-V1, draw=white, area legend] table[
    col sep=comma, x=Index, y=trip,
    restrict expr to domain={\coordindex}{6:8},
] {fig/data-final/exp2-lm/exp2_eigs.csv};

\addplot[fill=T-Q-V2, draw=white, area legend] table[
    col sep=comma, x=Index, y=residual-mode,
    restrict expr to domain={\coordindex}{6:8},
] {fig/data-final/exp2-lm/exp2_eigs.csv};

\addplot[fill=T-Q-MC7, draw=white, area legend] table[
    col sep=comma, x=Index, y=iasc,
    restrict expr to domain={\coordindex}{6:8},
] {fig/data-final/exp2-lm/exp2_eigs.csv};

\addplot[fill=T-Q-V6, draw=white, area legend] table[
    col sep=comma, x=Index, y=timers,
    restrict expr to domain={\coordindex}{6:8},
] {fig/data-final/exp2-lm/exp2_eigs.csv};

\addplot[fill=T-Q-V3, draw=white, area legend] table[
    col sep=comma, x=Index, y=grest-2,
    restrict expr to domain={\coordindex}{6:8},
] {fig/data-final/exp2-lm/exp2_eigs.csv};

\addplot[fill=T-Q-V4, draw=white, area legend] table[
    col sep=comma, x=Index, y=grest-3,
    restrict expr to domain={\coordindex}{6:8},
] {fig/data-final/exp2-lm/exp2_eigs.csv};

\addplot[fill=T-Q-V5, draw=white, area legend] table[
    col sep=comma, x=Index, y=grest-rsvd-100-100,
    restrict expr to domain={\coordindex}{6:8},
] {fig/data-final/exp2-lm/exp2_eigs.csv};

\nextgroupplot[
    title={AskUbuntu},
    ybar=0.25pt,
    ymode=log,
    xmin=0.5,
    xmax=3.5,
    log origin=infty,
    bar width=4pt,
    xtick={1,2,3},
    xticklabels={{$\vx_1$},{$\vx_2$},{$\vx_3$}},
    legend to name=leg:exp2-eigs,
]

\addplot[fill=T-Q-V1, draw=white, area legend] table[
    col sep=comma, x=Index, y=trip,
    restrict expr to domain={\coordindex}{9:11},
] {fig/data-final/exp2-lm/exp2_eigs.csv};
\addlegendentry{TRIP}

\addplot[fill=T-Q-V2, draw=white, area legend] table[
    col sep=comma, x=Index, y=residual-mode,
    restrict expr to domain={\coordindex}{9:11},
] {fig/data-final/exp2-lm/exp2_eigs.csv};
\addlegendentry{RM}

\addplot[fill=T-Q-MC7, draw=white, area legend] table[
    col sep=comma, x=Index, y=iasc,
    restrict expr to domain={\coordindex}{9:11},
] {fig/data-final/exp2-lm/exp2_eigs.csv};
\addlegendentry{IASC}

\addplot[fill=T-Q-V6, draw=white, area legend] table[
    col sep=comma, x=Index, y=timers,
    restrict expr to domain={\coordindex}{9:11},
] {fig/data-final/exp2-lm/exp2_eigs.csv};
\addlegendentry{TIMERS}

\addplot[fill=T-Q-V3, draw=white, area legend] table[
    col sep=comma, x=Index, y=grest-2,
    restrict expr to domain={\coordindex}{9:11},
] {fig/data-final/exp2-lm/exp2_eigs.csv};
\addlegendentry{$\proposed_2$}

\addplot[fill=T-Q-V4, draw=white, area legend] table[
    col sep=comma, x=Index, y=grest-3,
    restrict expr to domain={\coordindex}{9:11},
] {fig/data-final/exp2-lm/exp2_eigs.csv};
\addlegendentry{$\proposed_3$}

\addplot[fill=T-Q-V5, draw=white, area legend] table[
    col sep=comma, x=Index, y=grest-rsvd-100-100,
    restrict expr to domain={\coordindex}{9:11},
] {fig/data-final/exp2-lm/exp2_eigs.csv};
\addlegendentry{$\proposed_{\rm RSVD}$}

\end{groupplot}
\end{tikzpicture}%
    
    \hspace{2em} \ref{leg:exp2-eigavg} \\[0.2em]
    \input{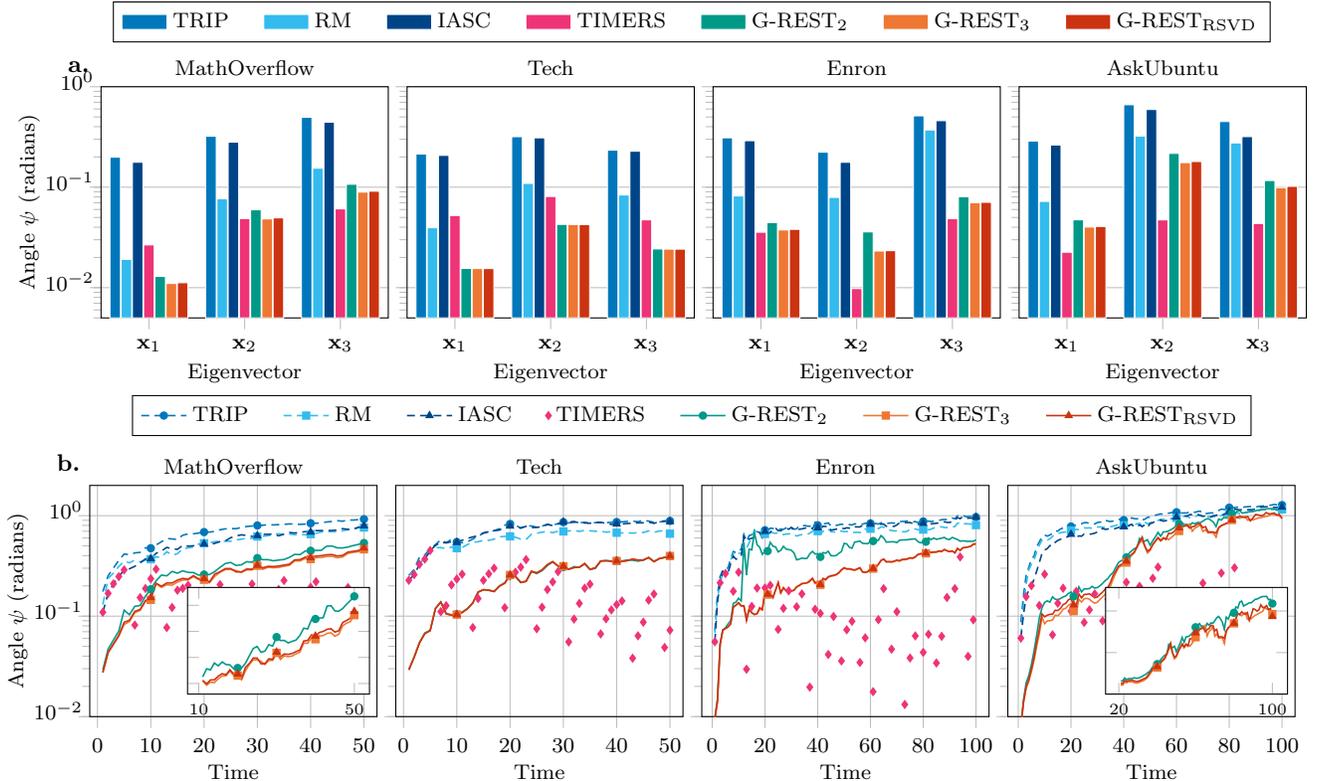}
    \vspace{-1em}
    \caption{Eigenvector approximation results for graphs with timestampted edges (Scenario 2). (a). shows the average of $\psi_{i,t}$ over time for the first three leading eigenvectors. (b). shows the average of $\psi_{i,t}$ over the first $32$ leading eigenvectors as a function of $t$. For some datasets in (b), insets zoom into the performance of the proposed methods to showcase their differences.}
    \label{fig:eig-approx-real-dyn-graphs}
\end{figure*}

\textbf{Scenario 1 - Creating dynamic graphs from static ones:} Here, all algorithms are evaluated using dynamic graphs constructed from static graph datasets, where edges are not timestamped. These datasets allow us to control the characteristics of dynamic graphs, such the size of graph expansion or time points. Static graphs (denoted as \emph{Type S}) are obtained from the Stanford Network Analysis Project (SNAP~\cite{snapnets}) and their properties are reported in Table \ref{tab:dataset_details}. All datasets are undirected graphs with the exception of Epinions, which is converted to an undirected one by ignoring edge directionality. For a dataset corresponding to a static graph $\calG = (\calV, \calE)$ with $|\calV| = N$, a dynamic graph $\{\calG^{(t)}\}_{t=0}^{T}$ is constructed as follows: First $N^{(0)}$ nodes with the highest degrees in $\calG$ are selected to form the initial vertex set $\calV^{(0)}$, whose induced subgraph is denoted as $\calG^{(0)} = (\calV^{(0)}, \calE^{(0)})$. The remaining graphs $\calG^{(t)}=(\calV^{(t)}, \calE^{(t)})$ for $t > 0$ are determined using graph expansion, i.e., $\calG^{(t)}$ is the subgraph induced by $\calV^{(t)} = \calV^{(t-1)} \cup \calC^{(t)}$ where $\calC^{(t)}$ is the set of $S^{(t)}$ nodes in $\calV \setminus \calV^{(t-1)}$ with the highest degrees in the original graph $\calG$. In the following, $N^{(0)} = \lfloor N/2 \rfloor $, $S^{(t)} = \lfloor (N-N^{(0)})/T \rfloor$. Different values of $T$ are used for each dataset to evaluate algorithm performance across multiple time horizons.

Fig. \ref{fig:eig-approx-case1} plots the approximation accuracy of the considered algorithms for four datasets. Fig. \ref{fig:eig-approx-case1}(a) shows $\psi_{i}^{(t)}$ averaged over time for the leading three eigenvectors, i.e., $\frac{1}{T}\sum_{t=1}^{T} \psi_{i}^{(t)}$ for $1 \leq i \leq 3$; while Fig. \ref{fig:eig-approx-case1}(b) reports the average of $\psi_{i}^{(t)}$ over the leading $32$ eigenvectors, i.e., $\frac{1}{32}\sum_{i=1}^{32} \psi_{i}^{(t)}$,  as a function of time. TIMERS results are not reported for the Twitch dataset, as it did not provide results within a reasonable time frame. $\proposed_2$, which uses the same subspace as RM, outperforms RM, indicating that searching for the optimal linear combination within the subspace through RR projection leads to improved results compared to using fixed coefficients. Both RM and $\proposed_2$ outperform TRIP, as the latter lacks information from untracked eigenvectors. IASC, which also uses RR, exhibits the same performance as $\proposed_2$, which provides another indication for the effectiveness of RR for tracking eigenvectors of dynamic graphs. $\proposed_3$ and $\proposed_{\rm RSVD}$ provide significant improvements in eigenvector approximation accuracy, as their projection space includes information from new nodes. $\proposed_3$ outperforms $\proposed_{\rm RSVD}$ since the latter approximates the projection subspace using RSVD. Still, $\proposed_{\rm RSVD}$ provides reasonable performance while enjoying lower computational complexity complexity, as will be discussed later. Finally, since TIMERS can trigger a truncated eigendecomposition of $\mA^{(t)}$, it improves upon IASC and enjoys the best approximation accuracy across time, as can be seen in Fig. \ref{fig:eig-approx-case1}(b). Note, however, that TIMERS has a significantly larger memory footprint than $\proposed$. Interestingly, the results of Fig.~\ref{fig:eig-approx-case1}(a) indicate that  $\proposed_3$ provides better accuracy than TIMERS in estimating first three eigenvectors.

\textbf{Scenario 2 - Graphs with timestamped edges:} This scenario considers real graph datasets with timestamped edges. As with the static graphs, datasets are obtained from SNAP with the exception of Tech dataset, which is acquired from the Network Data Repository \cite{rossi2015network}. Datasets and their properties are listed in Table \ref{tab:dataset_details} (denoted as \emph{Type D}). From $M$ timestamped edges, ordered based on their time of appearance, a dynamic graph $\{\calG^{(t)}\}_{t=0}^{T}$ is constructed as follows: $\calG^{(0)}=(\calV^{(0)}, \calE^{(0)})$ is constructed by setting $\calE^{(0)}$ to the first $M^{(0)}$ timestamped edges with $\calV^{(0)}$ being the set nodes involved in $\calE^{(0)}$. The remaining $\calG^{(t)} = (\calV^{(t)}, \calE^{(t)})$'s are determined by expanding the edge set by $M^{(t)}$ edges, that is, $\calE^{(t)} = \calE^{(t-1)} \cup \widehat{\calE}^{(t)}$ where $\widehat{\calE}^{(t)}$ includes the next $M^{(t)}$ timestamped edges. These graphs contain both topological updates and graph expansion.

Results for these datasets are reported in Fig. \ref{fig:eig-approx-real-dyn-graphs} where $M^{(0)} = \lfloor M/2 \rfloor$, $M^{(t)} = \lfloor (M - M^{(0)})/T\rfloor$, with $T=50$ for the MathOverflow and Tech datasets, and $T=100$ for the remaining two datasets. As in the previous scenario, the performance of estimating the leading three eigenvectors and the leading $32$ eigenvectors is shown in Figs. \ref{fig:eig-approx-real-dyn-graphs}(a) and \ref{fig:eig-approx-real-dyn-graphs}(b), respectively. Overall, TIMERS followed by $\proposed_3$ and $\proposed_{\rm RSVD}$ provide the best estimation performance. Consistent with the first scenario, $\proposed_2$'s accuracy is better than RM and TRIP, while IACS is not performing as well, possibly due to its inability to capture the topological updates that occur in the dynamic graphs. %
\subsection{Runtime and Computational Efficiency}
\label{ssec:runtimes}
Next, the compared algorithms are evaluated in terms of runtime, which serves as a proxy for computational complexity. In particular, all algorithms are applied to the datasets considered in Scenario 1 and Scenario 2 of the previous subsection to measure the time required by each one to track the eigendecomposition of graph adjacency matrices. Results are reported in Fig. \ref{fig:run-times}. The run time of \verb|eigs| is also reported as baseline. Results for the datasets of Scenario 1 are shown in Fig. \ref{fig:run-times}(a). Both $\proposed_3$ and TIMERS exhibit comparable runtimes to \verb|eigs|. TIMERS’ high time complexity is due to the computation of a proxy for the eigenvector approximation error, which occurs at each time step. The number of nodes added at each time-step for these datasets is between 500 and 1000, resulting in high runtimes for $\proposed_3$. $\proposed_{\rm RSVD}$ exhibits significantly lower computational complexity. This suggests that randomized SVD is an effective approach to reduce time complexity of $\proposed_3$, when graphs are expanded with large numbers of nodes. At the same time $\proposed_{\rm RSVD}$ enjoys comparable eigenpair approximation accuracy with $\proposed_3$, as discussed in previous section. Among the remaining algorithms, TRIP followed by RM are the fastest. $\proposed_2$ requires more time than TRIP and RM, while being faster than IASC. 

Results for the datasets of Scenario 2 are reported in Fig. \ref{fig:run-times}(b). TRIP, RM, IASC and $\proposed_2$ exhibit similar behavior to scenario 1. TIMERS has a runtime comparable to \verb|eigs|. $\proposed_3$ requires less time than \verb|eigs| for MathOverflow and Tech datasets, while having similar runtime for the remaining two datasets. For the former datasets, the average number of new nodes arriving at each time step is $270$ and $42$, respectively, which translates to lower runtimes for $\proposed_3$. For the latter two datasets the size of graph expansion at each time is greater than $400$, thus $\proposed_3$ exhibits increased runtimes, similar to Scenario 1. Nevertheless, as before, the RSVD version of $\proposed_3$ reduces the time complexity while providing good estimation accuracy. The results from this and the previous subsection imply that $\proposed_{\rm RSVD}$ provides the best performance while enjoying reasonable computational complexity. $\proposed_3$ is viable when the size of graph expansion is small as observed in MathOverflow and Tech datasets; for large graph expansions $\proposed_{\rm RSVD}$ is suggested. 

To sum up the results of Secs.~\ref{ssec:eigs-est-results} and~\ref{ssec:runtimes}, the proposed approach can achieve high eigenvector approximation accuracy, while maintaining reasonable computational and memory complexity.

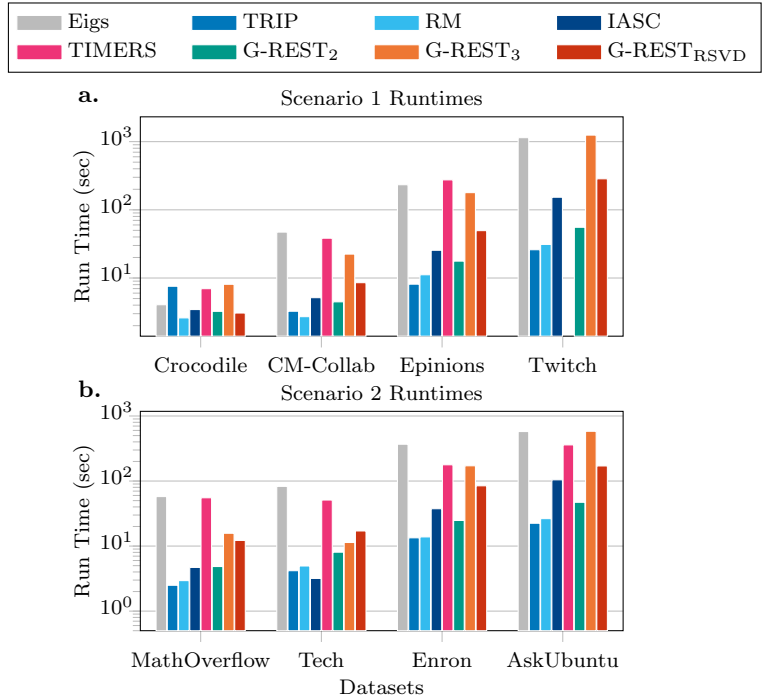
\begin{figure}[t]
    \centering
    \hspace{2em} \ref{leg:run-time} \\[0.2em]
%
%
%
\begin{tikzpicture}
\begin{groupplot}[%
    group style={
        group size=1 by 2,
        vertical sep=1cm,
        xlabels at=edge bottom,
        yticklabels at=edge left,
    },
    width=8cm,
    height=4.5cm,
    ymajorgrids,
    tick align=outside,
    tick label style={font=\scriptsize},
    tick pos=left,
    ymode=log,
    yticklabel style={xshift=0.5em, yshift=0.2em},
    xlabel={Datasets},
    ylabel={Run Time (sec)},
    label style={font=\scriptsize},
    ylabel style={yshift=-0.5em},
    xlabel style={yshift=0.25em},
    legend style={
        anchor=center,
        at={(0.5, 0.5)},
        font=\scriptsize,
        legend columns=4,
        /tikz/every even column/.append style={
            column sep=1em,
        }
    },
    legend cell align={left},
    title style={yshift=-0.5em, font=\scriptsize},
]

\nextgroupplot[
    title={Scenario 1 Runtimes},
    ybar=0.25pt,
    ymode=log,
    xmin=0.5,
    xmax=4.5,
    log origin=infty,
    bar width=4pt,
    xtick={1,2,3,4},
    xticklabels={{Crocodile},{CM-Collab},{Epinions},{Twitch}},
    after end axis/.code={
        \node[anchor=south west, font=\bfseries\footnotesize] at (rel axis cs:-0.15,1.02) {a.};
    },
]

\addplot[fill=T-Q-V0, draw=white, area legend] table[
    col sep=comma, y=eigs, x=Index
] {fig/data-final/exp1-lm/exp1_run_times.csv};

\addplot[fill=T-Q-V1, draw=white, area legend] table[
    col sep=comma, y=trip, x=Index
] {fig/data-final/exp1-lm/exp1_run_times.csv};

\addplot[fill=T-Q-V2, draw=white, area legend] table[
    col sep=comma, y=residual-mode, x=Index
] {fig/data-final/exp1-lm/exp1_run_times.csv};

\addplot[fill=T-Q-MC7, draw=white, area legend] table[
    col sep=comma, y=iasc, x=Index,
] {fig/data-final/exp1-lm/exp1_run_times.csv};

\addplot[fill=T-Q-V6, draw=white, area legend] table[
    col sep=comma, y=timers, x=Index,
] {fig/data-final/exp1-lm/exp1_run_times.csv};

\addplot[fill=T-Q-V3, draw=white, area legend] table[
    col sep=comma, y=grest-2, x=Index
] {fig/data-final/exp1-lm/exp1_run_times.csv};

\addplot[fill=T-Q-V4, draw=white, area legend] table[
    col sep=comma, y=grest-3, x=Index
] {fig/data-final/exp1-lm/exp1_run_times.csv};

\addplot[fill=T-Q-V5, draw=white, area legend] table[
    col sep=comma, y=grest-rsvd-100-100, x=Index
] {fig/data-final/exp1-lm/exp1_run_times.csv};

\nextgroupplot[
    title={Scenario 2 Runtimes},
    ybar=0.25pt,
    ymode=log,
    xmin=0.5,
    xmax=4.5,
    log origin=infty,
    bar width=4pt,
    xtick={1,2,3,4},
    xticklabels={{MathOverflow},{Tech},{Enron},{AskUbuntu}},
    after end axis/.code={
        \node[anchor=south west, font=\bfseries\footnotesize] at (rel axis cs:-0.15,1.02) {b.};
    },
    legend to name=leg:run-time,
    ymin=0.5
]

\addplot[fill=T-Q-V0, draw=white, area legend] table[
    col sep=comma, y=eigs, x=Index
] {fig/data-final/exp2-lm/exp2_run_times.csv};
\addlegendentry{Eigs}

\addplot[fill=T-Q-V1, draw=white, area legend] table[
    col sep=comma, y=trip, x=Index
] {fig/data-final/exp2-lm/exp2_run_times.csv};
\addlegendentry{TRIP}

\addplot[fill=T-Q-V2, draw=white, area legend] table[
    col sep=comma, y=residual-mode, x=Index
] {fig/data-final/exp2-lm/exp2_run_times.csv};
\addlegendentry{RM}

\addplot[fill=T-Q-MC7, draw=white, area legend] table[
    col sep=comma, y=iasc, x=Index,
] {fig/data-final/exp2-lm/exp2_run_times.csv};
\addlegendentry{IASC}

\addplot[fill=T-Q-V6, draw=white, area legend] table[
    col sep=comma, y=timers, x=Index,
] {fig/data-final/exp2-lm/exp2_run_times.csv};
\addlegendentry{TIMERS}

\addplot[fill=T-Q-V3, draw=white, area legend] table[
    col sep=comma, y=grest-2, x=Index
] {fig/data-final/exp2-lm/exp2_run_times.csv};
\addlegendentry{$\proposed_{2}$}

\addplot[fill=T-Q-V4, draw=white, area legend] table[
    col sep=comma, y=grest-3, x=Index
] {fig/data-final/exp2-lm/exp2_run_times.csv};
\addlegendentry{$\proposed_{3}$}

\addplot[fill=T-Q-V5, draw=white, area legend] table[
    col sep=comma, y=grest-rsvd-100-100, x=Index
] {fig/data-final/exp2-lm/exp2_run_times.csv};
\addlegendentry{$\proposed_{\rm RSVD}$}

\end{groupplot}
\end{tikzpicture}%
    \vspace{-1em}
    \caption{Run times of different algorithms in seconds for the scenaria of Section \ref{ssec:eigs-est-results}. (a). reports the times needed to process datasets of Scenario 1, while (b). reports the times for Scenario 2.}
    \label{fig:run-times}
\end{figure}

\subsection{Complexity and Accuracy Trade-off for RSVD}

The results of the previous subsections indicate that RSVD can reduce the time complexity of $\proposed_3$, while providing comparable eigenvector approximation accuracy. Here, the effectiveness of RSVD is further investigated, on the CM-Collab dataset, by studying how its time complexity and performance varies as the oversampling parameter $P$ and the rank parameter $L$ change (See Sec.~\ref{ssec:RSVD}). For each pair $(P,L)$, run time and eigenvector estimation accuracy are measured. As with the results of Sec.~\ref{ssec:eigs-est-results} estimation accuracy results are averaged over time and the leading $32$ eigenvectors, i.e., $\frac{1}{32T} \sum_{i=1}^{32} \sum_{t=2}^{T+1} \psi_{i}^{(t)}$. Fig. \ref{fig:hyperparam-sensitivity} shows approximation performance and run time of $\proposed_{\rm RSVD}$ with respect to the performance of $\proposed_3$. As $L$ and $P$ increase, so does the runtime of $\proposed_{\rm RSVD}$. At the same time, eigenvector approximation quality approaches that of $\proposed_3$.  Overall, the results highlight a clear complexity–accuracy trade-off: larger values of $L$ and $P$ improve approximation quality but increase computational cost, while moderate values already achieve near-identical accuracy with substantially lower runtime.

\subsection{Central Node Identification}

The next set of numerical tests focuses on the performance of downstream tasks using the eigenembeddings returned by the proposed algorithm. A fundamental graph mining task is identifying which nodes of a graph are important, or ``central''. The importance of nodes is commonly quantified via so-called centrality measures, which are used to rank nodes based on their importance within the graph topology. Although a plethora of measures are developed in literature, this section focuses on subgraph centrality since it utilizes both the leading eigenvalues and corresponding eigenvectors of the graph adjacency matrix~\cite{estrada2005subgraph,kalantzis2025single}. As before, let $\widetilde{\mLambda}_K^{(t)}$ be the diagonal matrix of $K$ leading eigenvalues, estimated by any of the considered methods for time point $t$, and let $\widetilde{\mX}_K^{(t)}$ be the matrix containing the corresponding eigenvectors. Subgraph centrality is approximated via the vector $\exp(\mA^{(t)})\mathbf{1}\approx\exp( \widetilde{\mX}_K^{(t)} \widetilde{\mLambda}_K^{(t)} \widetilde{\mX}_K^{(t)}{}^\top)  \mathbf{1} = \widetilde{\mX}_K^{(t)}\exp(\widetilde{\mLambda}_K^{(t)})\widetilde{\mX}_K^{(t)}{}^\top\mathbf{1} \in \setR^{N^{(t)}}$. Entries of this vector correspond to nodes of the graph, and entries with larger values indicate more ``central'' nodes. 

The algorithms under consideration are applied to dynamic graphs constructed in Scenario 1 of Sec. \ref{ssec:eigs-est-results}. For each time point $t$ a set $\widehat{\calI}^{(t)}$ of the $J$ most central nodes is identified using the estimated leading $32$ eigenpairs. As a baseline, a set $\calI^{(t)}$ is found using the reference leading eigenpairs returned by \verb|eigs| function. The performance of all algorithms in identifying central nodes is then measured as the average of $|\widetilde{\calI}^{(t)} \cap \calI^{(t)}|/J$ over $t$. Results are reported in Table \ref{tab:centrality} for all datasets and and two values of $J$, $100$ and $1000$. The results closely follow observations from previous numerical tests. TIMERS provides the best performance, while $\proposed_3$ and $\proposed_{\rm RSVD}$ exhibit the second best performance, with reduced computational cost. Among remaining algorithms, TRIP and RM have the worst performance, while IASC and $\proposed_2$ have the same results, inline with Scenario 1 of Sec. \ref{ssec:eigs-est-results}. 

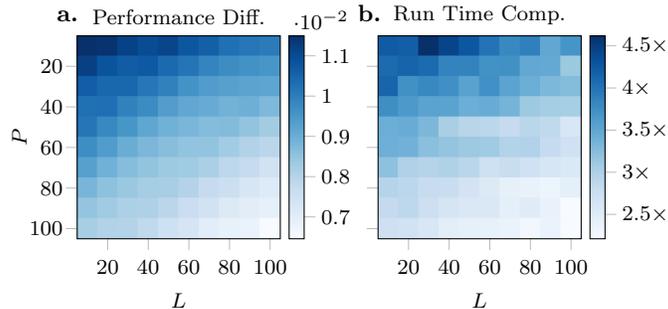
\begin{figure}[t]
    \centering
    \begin{tikzpicture}
    \begin{groupplot}[
        group style={
            group size=2 by 1,
            horizontal sep=1.3cm,
            ylabels at=edge left,
            yticklabels at=edge left,
        },
        width=2.7cm,
        height=2.7cm,
        colormap/Blues,
        tick align=outside,
        tick label style={font=\scriptsize},
        tick pos=left,
        yticklabel style={xshift=0.25em},
        xlabel={$L$},
        ylabel={$P$},
        label style={font=\scriptsize},
        title style={yshift=-0.5em, font=\scriptsize},
    ]

    \nextgroupplot[
        title={Performance Diff.},
        enlargelimits={abs=5},
        axis on top,
        scale only axis,
        grid=none,
        ylabel style={yshift=-0.6em},
        colorbar right,
        colorbar/width=0.2cm,
        colorbar style={
            tick pos=right,
            xshift=-0.4cm,
            try min ticks=4,
            max space between ticks={15},
            y tick scale label style={xshift=2em},
            yticklabel style={
                xshift=-0.4em,
                /pgf/number format/fixed,
                /pgf/number format/precision=1,
            },
        },
        axis equal,
        xmin=10, xmax=100,
        ymin=10, ymax=100,
        after end axis/.code={
            \node[anchor=south west, font=\bfseries\footnotesize] at (rel axis cs:-0.15,1.02) {a.};
        },
    ]
    \addplot[
        matrix plot,
        point meta=explicit,
        mesh/cols=10,
    ] table [meta=Value, col sep=comma, x=L, y=P] {fig/data-final/param-sens-cm-collab-0.50-20-32-perf.csv};

    \nextgroupplot[
        title={Run Time Comp.},
        enlargelimits={abs=5},
        axis on top,
        scale only axis,
        grid=none,
        colorbar right,
        colorbar/width=0.2cm,
        colorbar style={
            tick pos=right,
            xshift=-0.4cm,
            ytick distance=0.5,
            yticklabel={\pgfmathparse{\tick}\pgfmathprintnumber{\pgfmathresult}$\times$},
            yticklabel style={/pgf/number format/1000 sep={}, xshift=-0.4em},
        },
        axis equal,
        xmin=10, xmax=100,
        ymin=10, ymax=100,
        after end axis/.code={
            \node[anchor=south west, font=\bfseries\footnotesize] at (rel axis cs:-0.15,1.02) {b.};
        },
    ]
    \addplot [
        matrix plot,
        point meta=explicit,
        mesh/cols=10,
    ] table [meta=Value, col sep=comma, x=L, y=P] {fig/data-final/param-sens-cm-collab-0.50-20-32-rt.csv};
    \end{groupplot}
\end{tikzpicture}
    \vspace{-1em}
    \caption{Effect of $L$ and $P$ parameters on eigenvector approximation and run time of $\proposed_{\rm RSVD}$, compared to $\proposed_3$. (a). plots the performance difference between $\proposed_{\rm RSVD}$ and $\proposed_3$, whose performance is $0.7 \cdot 10^{-2}$. (b). plots the ratio of $\proposed_3$'s run time to that of $\proposed_{\rm RSVD}$, i.e., how many times $\proposed_{\rm RSVD}$ is faster than $\proposed_3$.}
    \label{fig:hyperparam-sensitivity}
\end{figure}

\begin{table}[b]
\caption{Accuracy of detecting central nodes via subgraph centrality}
\centering
\begin{tabular}{llcccc}
\toprule
& & \multicolumn{4}{c}{Datasets} \\ \cmidrule{3-6}
Method & $J$ & Crocodile & CM-Collab & Epinions & Twitch \\ \midrule
TRIP & 100 & $91.0\%$ & $96.9\%$ & $98.0\%$ & $97.2\%$ \\ 
RM & 100 & $91.8\%$ & $96.8\%$ & $98.0\%$ & $96.6\%$ \\ 
IASC & 100 & $95.1\%$ & $98.0\%$ & $99.1\%$ & $97.3\%$ \\ 
TIMERS & 100 & $97.6\%$ & $99.3\%$ & $99.9\%$ & $-$ \\ 
$\proposed_2$ & 100 & $95.1\%$ & $98.0\%$ & $99.1\%$ & $97.3\%$ \\ 
$\proposed_3$ & 100 & $99.3\%$ & $99.0\%$ & $99.3\%$ & $98.9\%$ \\ 
$\proposed_{\rm RSVD}$ & 100 & $99.2\%$ & $98.6\%$ & $99.2\%$ & $98.0\%$ \\ 
\midrule
TRIP & 1000 & $97.8\%$ & $98.5\%$ & $99.4\%$ & $97.6\%$ \\ 
RM & 1000 & $98.1\%$ & $98.6\%$ & $99.4\%$ & $97.8\%$ \\ 
IASC & 1000 & $98.5\%$ & $99.6\%$ & $99.8\%$ & $99.1\%$ \\ 
TIMERS & 1000 & $99.1\%$ & $99.8\%$ & $99.9\%$ & $-$ \\ 
$\proposed_2$ & 1000 & $98.5\%$ & $99.6\%$ & $99.8\%$ & $99.1\%$ \\ 
$\proposed_3$ & 1000 & $98.8\%$ & $99.8\%$ & $99.9\%$ & $99.6\%$ \\ 
$\proposed_{\rm RSVD}$ & 1000 & $98.8\%$ & $99.7\%$ & $99.9\%$ & $99.4\%$ \\ 
\bottomrule
\end{tabular}
\label{tab:centrality}
\end{table}

\subsection{Graph Clustering}

Another key task in machine learning is to partition nodes of a graph $\calG$ into $K$ groups or clusters, such that nodes within a cluster are densely connected to each other while having few connections with the nodes in different clusters. Let $\calP_k$ denote the set of nodes that belong to the $k$th cluster. Then a partition of the nodes is denoted as $\frakP = \{\calP_k\}_{k=1}^K$.  A widely used approach for grouping nodes of a graph is spectral clustering~\cite{von2007tutorial}, which exploits the spectral properties of the Laplacian $\mathbf{L}$ or  normalized Laplacian matrix $\mathbf{L}_{\rm n}$. In the ideal case where there are no edges between clusters, the eigenvectors corresponding to the $K$ smallest eigenvalues of $\mathbf{L}_{\rm n}$ are indicator vectors of $\calP_k$'s. In practice when there are relatively few inter-cluster edges, these eigenvectors still provide an embedding that is informative to infer $\frakP$. Upon computing the eigenvectors associated with the $K$ smallest eigenvalues and collecting them as columns of $\mathbf{X}_{K}$, the rows of $\mathbf{X}_K$ are clustered using $K$-means~\cite{lloyd1982least}. As each row of $\mathbf{X}_K$ corresponds to a node in $\cal G$, this procedure yields an estimated partition $\widehat{\frakP}$. 

For a dynamic graph $\{\calG^{(t)}\}_{t=0}^T$, clustering consists of finding $\frakP^{(t)}$ for each $t$ where $\frakP^{(t)} = \{P_k^{(t)}\}_{r=1}^{K}$ is the partitioning of $\calG^{(t)}$ into $K$ clusters. This section benchmarks different tracking algorithms in terms of how well their eigenvector estimates can infer $\frakP^{(t)}$'s. Let $\widetilde{\mX}_{K}^{(t)}$ be an estimate of eigenvectors corresponding to smallest $K$ eigenvalues of $\mL_{\rm n}^{(t)}$ and let $\widehat{\frakP}^{(t)}$ be the clusters obtained by applying $K$-means to $\widetilde{\mX}_{K}^{(t)}$. The performance of different algorithms is measured via $ARI(\widehat{\frakP}^{(t)}, \frakP^{(t)})$ where $ARI$ denotes the adjusted rand index that measures the similarity between $\widehat{\frakP}^{(t)}$ and ground truth $\frakP^{(t)}$ \cite{hubert1985comparing}. 

\begin{figure}
    \centering
    \hspace{2em} \ref{leg:exp6-aris} \\[0.2em]
%
%
%
\begin{tikzpicture}
\begin{groupplot}[%
    group style={
        group size=2 by 1,
        horizontal sep=0.5cm,
        ylabels at=edge left,
        yticklabels at=edge left,
    },
    width=5cm,
    xmajorgrids,
    ymajorgrids,
    tick align=outside,
    tick label style={font=\scriptsize},
    tick pos=left,
    yticklabel style={xshift=0.25em},
    ylabel={ARI Ratio},
    label style={font=\scriptsize},
    ylabel style={yshift=-0.25em},
    xlabel style={yshift=0.25em},
    legend style={
        anchor=center,
        at={(0.5, 0.5)},
        font=\scriptsize,
        legend columns=4,
        /tikz/every even column/.append style={
            column sep=0.2em,
        }
    },
    legend cell align={left},
    enlarge x limits=0.05,
    enlarge y limits=0.05,
    ytick distance=0.1,
    ymin=0.7,
    ymax=1.0,
]

\nextgroupplot[
    title={ARI vs Inter-cluster Probability},
    scaled x ticks=false,
    xtick distance=0.002,
    xticklabel style={
        /pgf/number format/.cd,
        /pgf/number format/fixed, 
        /pgf/number format/precision=4,
        skip 0.,
    },
    xlabel={$p_{\rm out}$},
    title style={yshift=-0.5em, font=\scriptsize},
    after end axis/.code={
        \node[anchor=south west, font=\bfseries\footnotesize] at (rel axis cs:-0.15,1.02) {a.};
    },
]

\addplot[color=T-Q-V1, mark=*, MyLinePlot, densely dashed] table[
    col sep=comma, x=p_out, y expr={\thisrow{trip} / \thisrow{eigs}}
] {fig/data-final/exp6-aris-vs-pout.csv};

\addplot[color=T-Q-V2, mark=square*, MyLinePlot, densely dashed] table[
    col sep=comma, x=p_out, y expr={\thisrow{residual-mode} / \thisrow{eigs}}
] {fig/data-final/exp6-aris-vs-pout.csv};

\addplot[color=T-Q-MC7, mark=triangle*, MyLinePlot, densely dashed] table[
    col sep=comma, x=p_out, y expr={\thisrow{iasc} / \thisrow{eigs}}
] {fig/data-final/exp6-aris-vs-pout.csv};

\addplot[color=T-Q-V6, mark=diamond*, MyLinePlot, densely dashed] table[
    col sep=comma, x=p_out, y expr={\thisrow{timers} / \thisrow{eigs}}
] {fig/data-final/exp6-aris-vs-pout.csv};

\addplot[color=T-Q-V3, mark=*, MyLinePlot] table[
    col sep=comma, x=p_out, y expr={\thisrow{grest-2} / \thisrow{eigs}}
] {fig/data-final/exp6-aris-vs-pout.csv};

\addplot[color=T-Q-V4, mark=square*, MyLinePlot] table[
    col sep=comma, x=p_out, y expr={\thisrow{grest-3} / \thisrow{eigs}}
] {fig/data-final/exp6-aris-vs-pout.csv};

\addplot[color=T-Q-V5, mark=triangle*, MyLinePlot] table[
    col sep=comma, x=p_out, y expr={\thisrow{grest-rsvd-20-20} / \thisrow{eigs}}
] {fig/data-final/exp6-aris-vs-pout.csv};

\nextgroupplot[
    title={ARI vs Number of Clusters},
    legend to name=leg:exp6-aris,
    xtick distance=2,
    xlabel={$K$},
    title style={yshift=-0.3em, font=\scriptsize},
    after end axis/.code={
        \node[anchor=south west, font=\bfseries\footnotesize] at (rel axis cs:-0.15,1.02) {b.};
    },
]

\addplot[color=T-Q-V1, mark=*, MyLinePlot, densely dashed] table[
    col sep=comma, x=n_cl, y expr={\thisrow{trip} / \thisrow{eigs}}
] {fig/data-final/exp6-aris-vs-ncl.csv};
\addlegendentry{TRIP}

\addplot[color=T-Q-V2, mark=square*, MyLinePlot, densely dashed] table[
    col sep=comma, x=n_cl, y expr={\thisrow{residual-mode} / \thisrow{eigs}}
] {fig/data-final/exp6-aris-vs-ncl.csv};
\addlegendentry{RM}

\addplot[color=T-Q-MC7, mark=triangle*, MyLinePlot, densely dashed] table[
    col sep=comma, x=n_cl, y expr={\thisrow{iasc} / \thisrow{eigs}}
] {fig/data-final/exp6-aris-vs-ncl.csv};
\addlegendentry{IASC}

\addplot[color=T-Q-V6, mark=diamond*, MyLinePlot, densely dashed] table[
    col sep=comma, x=n_cl, y expr={\thisrow{timers} / \thisrow{eigs}}
] {fig/data-final/exp6-aris-vs-ncl.csv};
\addlegendentry{TIMERS}

\addplot[color=T-Q-V3, mark=*, MyLinePlot] table[
    col sep=comma, x=n_cl, y expr={\thisrow{grest-2} / \thisrow{eigs}}
] {fig/data-final/exp6-aris-vs-ncl.csv};
\addlegendentry{$\proposed_{2}$}

\addplot[color=T-Q-V4, mark=square*, MyLinePlot] table[
    col sep=comma, x=n_cl, y expr={\thisrow{grest-3} / \thisrow{eigs}}
] {fig/data-final/exp6-aris-vs-ncl.csv};
\addlegendentry{$\proposed_{3}$}

\addplot[color=T-Q-V5, mark=triangle*, MyLinePlot] table[
    col sep=comma, x=n_cl, y expr={\thisrow{grest-rsvd-20-20} / \thisrow{eigs}}
] {fig/data-final/exp6-aris-vs-ncl.csv};
\addlegendentry{$\proposed_{\rm RSVD}$}

\end{groupplot}
\end{tikzpicture}%
    \caption{Clustering performance on synthetic dynamic graphs with clustering structure. a. and b. show the performance with respect to inter-cluster edge probability $p_{\rm out}$ and number of clusters $R$, respectively.}
    \label{fig:sbm-clustering}
\end{figure}
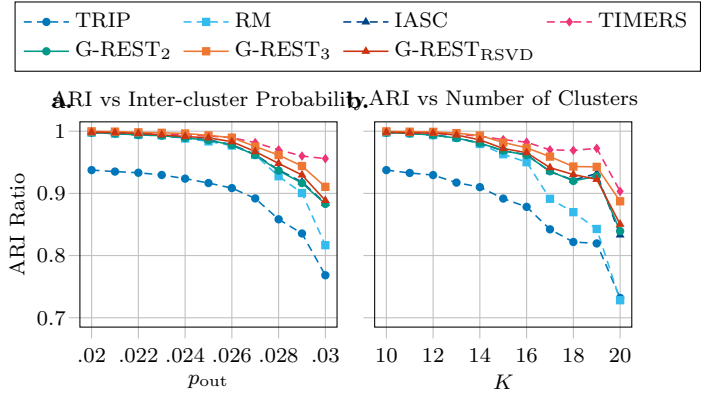

As the real datasets considered in previous experiments do not include clustering information of the nodes, the proposed method is tested on synthetic dynamic graphs generated using stochastic block models (SBM) where the ground truth $\frakP^{(t)}$'s are known. An SBM graph $\calG=(\calV, \calE)$ with $N$ nodes and $K$ clusters is generated by randomly assigning each node to a cluster $\calP_k$. Nodes $i$ and $j$ are connected with probability $p_{\rm in}$ if $i$ and $j$ are in the same cluster and they are connected with probability $p_{\rm out}$, otherwise. For a graph that exhibits clustering structure, it holds that $p_{\rm out} < p_{\rm in}$. Once $\calG$ is generated, a dynamic graph is created using the procedure outlined in previous experiments. Namely, a node set $\calV^{(0)}$ with cardinality $N^{(0)}$ is randomly selected from $\calV$ and $\calG^{(0)}$ is set as the subgraph induced by $\calV^{(0)}$. $\calG^{(t)}$ for $T \geq t > 0$ is then determined via graph expansion, i.e., it is the subgraph induced by $\calV^{(t)} = \calV^{(t-1)} \cup \calC^{(t)}$ where $\calC^{(t)}$ is the set of $S^{(t)}$ nodes randomly drawn from $\calV \setminus \calV^{(t-1)}$. In the following, $N=10,000$, $p_{\rm in}=0.05$, $N^{(0)}=9500$, $T=10$ and $C^{(t)} = 50$; while different values of $K$ and $p_{\rm out}$ are tested to observe their effects on clustering performance. Note that, since the size of the graph expansion is $C^{(t)}=50$, the parameters of $\proposed_{\rm RSVD}$ are reduced to $L=P=20$. 

Figs. \ref{fig:sbm-clustering}(a) and \ref{fig:sbm-clustering}(b) show the clustering performance of considered algorithms for varying values of $p_{\rm out}$ and $K$, respectively. The reported performance is with respect to the performance of reference eigenvectors obtained by \verb|eigs|, that is $\frac{1}{T} \sum_{t=1}^T \frac{ARI(\widehat{\frakP}_{\rm m}^{(t)}, \frakP^{(t)})}{ARI(\widehat{\frakP}_{\rm e}^{(t)}, \frakP^{(t)})}$ where $\widehat{\frakP}_{\rm e}^{(t)}$ and $\widehat{\frakP}_{\rm m}^{(t)}$ are the clusters found for time $t$ via the eigenvectors estimated via \verb|eigs| and those estimated by a tracking method, respectively. The results are inline with previous experiments; TIMERS exhibits the best performance, closely followed by $\proposed_{3}$. $\proposed_{\rm RSVD}$ is slightly better than $\proposed_2$ and IASC, which have the same performance. This is in line with observations from previous experiments where the updates only include graph expansion. RM and TRIP exhibit the worst ARI ratio with the former being superior to TRIP. Finally, increasing $K$ and $p_{\rm out}$ results in graphs with harder to detect clusters, yielding worse performance across all algorithms.

\section{Conclusions and future work}
\label{sec:conclusion}
The present paper introduced a novel algorithm for tracking the dominant eigenspace of an evolving graph. The proposed algorithm can handle both addition and removal of edges and addition of new nodes, while maintaining competitive computational and memory complexity. Performance of the novel algorithm was compared against state-of-the-art benchmarks on several graph datasets. Future work will focus on robust variants of the proposed methods, as well as methods that can handle removal of nodes, additional graph applications such as graph filtering for dynamic graphs, and distributed and federated variants of the proposed algorithm.
Another interesting direction is to consider the proposed algorithm to update the eigenembeddings of graph neural networks  when the underlying graph changes dynamically.

\bibliographystyle{ieeetr}
\bibliography{refs}

@book{byron1992mathematics,
  title = {Mathematics of Classical and Quantum Physics},
  author = {Byron, Frederick W. and Fuller, Robert W.},
  year = {1992},
  publisher = {Dover Publications},
  address = {New York},
  isbn = {978-0-486-67164-2}
}

@book{kato2013perturbation,
  title={Perturbation theory for linear operators},
  author={Kato, Tosio},
  volume={132},
  year={2013},
  publisher={Springer Science \& Business Media}
}

@book{wilkinson1965algebraic,
  title={The algebraic eigenvalue problem},
  author={Wilkinson, James Hardy},
  volume={662},
  year={1965},
  publisher={Oxford Clarendon}
}

@book{rellich1969perturbation,
  title={Perturbation theory of eigenvalue problems},
  author={Rellich, Franz and Berkowitz, Joseph},
  year={1969},
  publisher={CRC Press}
}

@inproceedings{chen2015fast,
  title = {Fast {{Eigen-Functions Tracking}} on {{Dynamic Graphs}}},
  booktitle = {Proceedings of the 2015 {{SIAM International Conference}} on {{Data Mining}}},
  author = {Chen, Chen and Tong, Hanghang},
  year = 2015,
  month = jun,
  pages = {559--567},
  publisher = {{Society for Industrial and Applied Mathematics}},
  doi = {10.1137/1.9781611974010.63},
  isbn = {978-1-61197-401-0}
}

@article{chen2017eigenfunctions,
  title = {On the Eigen-functions of Dynamic Graphs: {{Fast}} Tracking and Attribution Algorithms},
  shorttitle = {On the Eigen-functions of Dynamic Graphs},
  author = {Chen, Chen and Tong, Hanghang},
  year = 2017,
  month = apr,
  journal = {Statistical Analysis and Data Mining: The ASA Data Science Journal},
  volume = {10},
  number = {2},
  pages = {121--135},
  issn = {1932-1864, 1932-1872},
  doi = {10.1002/sam.11310},
  copyright = {http://onlinelibrary.wiley.com/termsAndConditions\#am}
}

@article{zha1999updating,
  title = {On {{Updating Problems}} in {{Latent Semantic Indexing}}},
  author = {Zha, Hongyuan and Simon, Horst D.},
  year = 1999,
  month = jan,
  journal = {SIAM Journal on Scientific Computing},
  volume = {21},
  number = {2},
  pages = {782--791},
  publisher = {{Society for Industrial and Applied Mathematics}},
  issn = {1064-8275},
  doi = {10.1137/S1064827597329266}
}

@article{dhanjal2014efficient,
  title = {Efficient Eigen-Updating for Spectral Graph Clustering},
  author = {Dhanjal, Charanpal and Gaudel, Romaric and Cl{\'e}men{\c c}on, St{\'e}phan},
  year = 2014,
  month = may,
  journal = {Neurocomputing},
  volume = {131},
  pages = {440--452},
  issn = {09252312},
  doi = {10.1016/j.neucom.2013.11.015}
}

@article{deng2024efficient,
  title = {Efficient {{Eigen-Decomposition}} for {{Low-Rank Symmetric Matrices}} in {{Graph Signal Processing}}: {{An Incremental Approach}}},
  shorttitle = {Efficient {{Eigen-Decomposition}} for {{Low-Rank Symmetric Matrices}} in {{Graph Signal Processing}}},
  author = {Deng, Qinwen and Zhang, Yangwen and Li, Mo and Zhang, Songyang and Ding, Zhi},
  year = 2024,
  month = aug,
  journal = {IEEE Transactions on Signal Processing},
  volume = {72},
  pages = {4918--4934},
  issn = {1053-587X, 1941-0476},
  doi = {10.1109/TSP.2024.3438676},
  copyright = {https://ieeexplore.ieee.org/Xplorehelp/downloads/license-information/IEEE.html}
}

@article{mitz2022perturbation,
  title={A Perturbation-Based Kernel Approximation Framework},
  author={Mitz, Roy and Shkolnisky, Yoel},
  journal={Journal of Machine Learning Research},
  volume={23},
  number={142},
  pages={1--26},
  year={2022}
}

@article{mitz2019symmetric,
  title={Symmetric rank-one updates from partial spectrum with an application to out-of-sample extension},
  author={Mitz, Roy and Sharon, Nir and Shkolnisky, Yoel},
  journal={SIAM Journal on Matrix Analysis and Applications},
  volume={40},
  number={3},
  pages={973--997},
  year={2019},
  publisher={SIAM}
}

@inproceedings{zhang2017timers,
  title={Timers: Error-bounded svd restart on dynamic networks},
  author={Zhang, Ziwei and Cui, Peng and Pei, Jian and Wang, Xiao and Zhu, Wenwu},
  booktitle={Proceedings of the 2017 international conference on management of data},
  pages={221--236},
  year={2017}
}

@inproceedings{kang2011centralities,
  title={Centralities in large graphs: Algorithms and next steps},
  author={Kang, U and Meeder, Brendan and Papalexakis, Evangelos E and Faloutsos, Christos},
  booktitle={Proceedings of the 17th ACM SIGKDD International Conference on Knowledge Discovery and Data Mining},
  pages={2--2},
  year={2011},
  organization={ACM}
}

@misc{snapnets,
  author       = {Jure Leskovec and Andrej Krevl},
  title        = {{SNAP Datasets}: {Stanford} Large Network Dataset Collection},
  howpublished = {\url{http://snap.stanford.edu/data}},
  month        = jun,
  year         = 2014
}

@article{farkas_spectra_2001,
  title = {Spectra of ``real-world'' graphs: Beyond the semicircle law},
  author = {Farkas, Ill\'es J. and Der\'enyi, Imre and Barab\'asi, Albert-L\'aszl\'o and Vicsek, Tam\'as},
  journal = {Phys. Rev. E},
  volume = {64},
  issue = {2},
  pages = {026704},
  numpages = {12},
  year = {2001},
  month = {Jul},
  publisher = {American Physical Society},
  doi = {10.1103/PhysRevE.64.026704},
  url = {https://link.aps.org/doi/10.1103/PhysRevE.64.026704}
}

@article{vecharynski2014fast,
  title={Fast updating algorithms for latent semantic indexing},
  author={Vecharynski, Eugene and Saad, Yousef},
  journal={SIAM Journal on Matrix Analysis and Applications},
  volume={35},
  number={3},
  pages={1105--1131},
  year={2014},
  publisher={SIAM}
}

@article{spectralclustering,
  title={A tutorial on spectral clustering},
  author={Von Luxburg, Ulrike},
  journal={Statistics and computing},
  volume={17},
  number={4},
  pages={395--416},
  year={2007},
  publisher={Springer}
}

@article{vatrapu2016social,
  title={Social set analysis: A set theoretical approach to big data analytics},
  author={Vatrapu, Ravi and Mukkamala, Raghava Rao and Hussain, Abid and Flesch, Benjamin},
  journal={Ieee Access},
  volume={4},
  pages={2542--2571},
  year={2016},
  publisher={IEEE}
}

@article{estrada2005subgraph,
  title={Subgraph centrality in complex networks},
  author={Estrada, Ernesto and Rodriguez-Velazquez, Juan A},
  journal={Physical Review E},
  volume={71},
  number={5},
  pages={056103},
  year={2005},
  publisher={APS}
}

@article{simon1984lanczos,
  title={The Lanczos algorithm with partial reorthogonalization},
  author={Simon, Horst D},
  journal={Mathematics of computation},
  volume={42},
  number={165},
  pages={115--142},
  year={1984}
}

@article{moonen1992singular,
  title={A singular value decomposition updating algorithm for subspace tracking},
  author={Moonen, Marc and Van Dooren, Paul and Vandewalle, Joos},
  journal={SIAM Journal on Matrix Analysis and Applications},
  volume={13},
  number={4},
  pages={1015--1038},
  year={1992},
  publisher={SIAM}
}

@article{hall2002adding,
  title={Adding and subtracting eigenspaces with eigenvalue decomposition and singular value decomposition},
  author={Hall, Peter and Marshall, David and Martin, Ralph},
  journal={Image and Vision Computing},
  volume={20},
  number={13-14},
  pages={1009--1016},
  year={2002},
  publisher={Elsevier}
}

@inproceedings{brand2003fast,
  title={Fast online svd revisions for lightweight recommender systems},
  author={Brand, Matthew},
  booktitle={Proceedings of the 2003 SIAM international conference on data mining},
  pages={37--46},
  year={2003},
  organization={SIAM}
}

@article{degroat2002efficient,
  title={Efficient, numerically stabilized rank-one eigenstructure updating (signal processing)},
  author={DeGroat, Ronald D and Roberts, Richard A},
  journal={IEEE Transactions on acoustics, speech, and signal processing},
  volume={38},
  number={2},
  pages={301--316},
  year={2002},
  publisher={IEEE}
}

@inproceedings{rossi2015network,
  title={The network data repository with interactive graph analytics and visualization},
  author={Rossi, Ryan and Ahmed, Nesreen},
  booktitle={Proceedings of the AAAI conference on artificial intelligence},
  volume={29},
  number={1},
  year={2015}
}

@inproceedings{brand2002incremental,
  title={Incremental singular value decomposition of uncertain data with missing values},
  author={Brand, Matthew},
  booktitle={European Conference on Computer Vision},
  pages={707--720},
  year={2002},
  organization={Springer}
}

@article{hubert1985comparing,
  title={Comparing partitions},
  author={Hubert, Lawrence and Arabie, Phipps},
  journal={Journal of classification},
  volume={2},
  number={1},
  pages={193--218},
  year={1985},
  publisher={Springer}
}

@article{cai2018comprehensive,
  title={A comprehensive survey of graph embedding: Problems, techniques, and applications},
  author={Cai, Hongyun and Zheng, Vincent W and Chang, Kevin Chen-Chuan},
  journal={IEEE transactions on knowledge and data engineering},
  volume={30},
  number={9},
  pages={1616--1637},
  year={2018},
  publisher={IEEE}
}

@article{xu2021understanding,
  title={Understanding graph embedding methods and their applications},
  author={Xu, Mengjia},
  journal={SIAM Review},
  volume={63},
  number={4},
  pages={825--853},
  year={2021},
  publisher={SIAM}
}

@ARTICLE{gsp_spm_2023,
  author={Leus, Geert and Marques, Antonio G. and Moura, José M.F. and Ortega, Antonio and Shuman, David I},
  journal={IEEE Signal Processing Magazine}, 
  title={Graph Signal Processing: History, development, impact, and outlook}, 
  year={2023},
  volume={40},
  number={4},
  pages={49-60},
  doi={10.1109/MSP.2023.3262906}}

@article{lovasz1993random,
  title={Random walks on graphs},
  author={Lov{\'a}sz, L{\'a}szl{\'o}},
  journal={Combinatorics, Paul erdos is eighty},
  volume={2},
  number={1-46},
  pages={4},
  year={1993}
}

@inproceedings{kollias2022directed,
  title={Directed graph auto-encoders},
  author={Kollias, Georgios and Kalantzis, Vasileios and Id{\'e}, Tsuyoshi and Lozano, Aur{\'e}lie and Abe, Naoki},
  booktitle={Proceedings of the AAAI conference on artificial intelligence},
  volume={36},
  number={7},
  pages={7211--7219},
  year={2022}
}

@article{chi2013petrels,
  title={Petrels: Parallel subspace estimation and tracking by recursive least squares from partial observations},
  author={Chi, Yuejie and Eldar, Yonina C and Calderbank, Robert},
  journal={IEEE Transactions on Signal Processing},
  volume={61},
  number={23},
  pages={5947--5959},
  year={2013},
  publisher={IEEE}
}

@book{higham2016matlab,
  title={MATLAB guide},
  author={Higham, Desmond J and Higham, Nicholas J},
  year={2016},
  publisher={SIAM}
}

@article{kalantzis2025single,
  title={Single-pass top-N subgraph centrality of graphs via subspace projections},
  author={Kalantzis, Vassilis and Kollias, Georgios and Ubaru, Shashanka and Abe, Naoki and Horesh, Lior},
  journal={Journal of Complex Networks},
  volume={13},
  number={1},
  pages={cnae049},
  year={2025},
  publisher={Oxford University Press}
}

@article{marenco2022online,
  title={Online change point detection for weighted and directed random dot product graphs},
  author={Marenco, Bernardo and Bermolen, Paola and Fiori, Marcelo and Larroca, Federico and Mateos, Gonzalo},
  journal={IEEE Transactions on Signal and Information Processing over Networks},
  volume={8},
  pages={144--159},
  year={2022},
  publisher={IEEE}
}

@article{fiori2023gradient,
  title={Gradient-based spectral embeddings of random dot product graphs},
  author={Fiori, Marcelo and Marenco, Bernardo and Larroca, Federico and Bermolen, Paola and Mateos, Gonzalo},
  journal={IEEE Transactions on Signal and Information Processing over Networks},
  volume={10},
  pages={1--16},
  year={2023},
  publisher={IEEE}
}

@article{abed2002fast,
  title={Fast orthonormal PAST algorithm},
  author={Abed-Meraim, Karim and Chkeif, Ammar and Hua, Yingbo},
  journal={IEEE Signal processing letters},
  volume={7},
  number={3},
  pages={60--62},
  year={2002},
  publisher={IEEE}
}

@article{badeau2005fast,
  title={Fast approximated power iteration subspace tracking},
  author={Badeau, Roland and David, Bertrand and Richard, Ga{\"e}l},
  journal={IEEE Transactions on Signal Processing},
  volume={53},
  number={8},
  pages={2931--2941},
  year={2005},
  publisher={IEEE}
}

@ARTICLE{isufi_tsp_graphfilters_2024,
  author={Isufi, Elvin and Gama, Fernando and Shuman, David I and Segarra, Santiago},
  journal={IEEE Transactions on Signal Processing}, 
  title={Graph Filters for Signal Processing and Machine Learning on Graphs}, 
  year={2024},
  volume={72},
  number={},
  pages={4745-4781},
  doi={10.1109/TSP.2024.3349788}}

@article{jmlr_randomdotproduct_2018,
  author  = {Avanti Athreya and Donniell E. Fishkind and Minh Tang and Carey E. Priebe and Youngser Park and Joshua T. Vogelstein and Keith Levin and Vince Lyzinski and Yichen Qin and Daniel L Sussman},
  title   = {Statistical Inference on Random Dot Product Graphs: a Survey},
  journal = {Journal of Machine Learning Research},
  year    = {2018},
  volume  = {18},
  number  = {226},
  pages   = {1--92},
  url     = {http://jmlr.org/papers/v18/17-448.html}
}

@ARTICLE{gd_based_spectral_embeddings_TSP24,
  author={Fiori, Marcelo and Marenco, Bernardo and Larroca, Federico and Bermolen, Paola and Mateos, Gonzalo},
  journal={IEEE Transactions on Signal and Information Processing over Networks}, 
  title={Gradient-Based Spectral Embeddings of Random Dot Product Graphs}, 
  year={2024},
  volume={10},
  number={},
  pages={1-16},
  doi={10.1109/TSIPN.2023.3343607}}

@misc{marenco2025weightedrandomdotproduct,
      title={Weighted Random Dot Product Graphs}, 
      author={Bernardo Marenco and Paola Bermolen and Marcelo Fiori and Federico Larroca and Gonzalo Mateos},
      year={2025},
      eprint={2505.03649},
      archivePrefix={arXiv},
      primaryClass={stat.ML},
      url={https://arxiv.org/abs/2505.03649}, 
}

@InProceedings{smola_kondor_graphkernels_2003,
author="Smola, Alexander J.
and Kondor, Risi",
editor="Sch{\"o}lkopf, Bernhard
and Warmuth, Manfred K.",
title="Kernels and Regularization on Graphs",
booktitle="Learning Theory and Kernel Machines",
year="2003",
publisher="Springer Berlin Heidelberg",
address="Berlin, Heidelberg",
pages="144--158",
isbn="978-3-540-45167-9"
}

@article{dynamic_network_embedding_survey_2022,
author = {Xue, Guotong and Zhong, Ming and Li, Jianxin and Chen, Jia and Zhai, Chengshuai and Kong, Ruochen},
title = {Dynamic network embedding survey},
year = {2022},
issue_date = {Feb 2022},
publisher = {Elsevier Science Publishers B. V.},
address = {NLD},
volume = {472},
number = {C},
issn = {0925-2312},
url = {https://doi.org/10.1016/j.neucom.2021.03.138},
doi = {10.1016/j.neucom.2021.03.138},
journal = {Neurocomput.},
month = feb,
pages = {212–223},
numpages = {12}
}

@ARTICLE{network_representation_survey_2020,
  author={Zhang, Daokun and Yin, Jie and Zhu, Xingquan and Zhang, Chengqi},
  journal={IEEE Transactions on Big Data}, 
  title={Network Representation Learning: A Survey}, 
  year={2020},
  volume={6},
  number={1},
  pages={3-28},
  doi={10.1109/TBDATA.2018.2850013}}

@article{Chen_Wang_Wang_Kuo_2020, title={Graph representation learning: a survey}, volume={9}, DOI={10.1017/ATSIP.2020.13}, journal={APSIPA Transactions on Signal and Information Processing}, author={Chen, Fenxiao and Wang, Yun-Cheng and Wang, Bin and Kuo, C.-C. Jay}, year={2020}, pages={e15}}

@article{halko2011finding,
  title={Finding structure with randomness: Probabilistic algorithms for constructing approximate matrix decompositions},
  author={Halko, Nathan and Martinsson, Per-Gunnar and Tropp, Joel A},
  journal={SIAM review},
  volume={53},
  number={2},
  pages={217--288},
  year={2011},
  publisher={SIAM}
}

@article{martinsson2020randomized,
  title={Randomized numerical linear algebra: Foundations and algorithms},
  author={Martinsson, Per-Gunnar and Tropp, Joel A},
  journal={Acta Numerica},
  volume={29},
  pages={403--572},
  year={2020},
  publisher={Cambridge University Press}
}

@inproceedings{kalantzis2023matrix,
  title = {Matrix {{Resolvent Eigenembeddings}} for {{Dynamic Graphs}}},
  booktitle = {{{ICASSP}} 2023 - 2023 {{IEEE International Conference}} on {{Acoustics}}, {{Speech}} and {{Signal Processing}} ({{ICASSP}})},
  author = {Kalantzis, Vasileios and Traganitis, Panagiotis A.},
  year = {2023},
  month = jun,
  pages = {1--5},
  publisher = {IEEE},
  address = {Rhodes Island, Greece},
  doi = {10.1109/ICASSP49357.2023.10096476},
  urldate = {2025-05-12},
  copyright = {https://doi.org/10.15223/policy-029},
  isbn = {978-1-7281-6327-7},
  langid = {english}
}

@inproceedings{kalantzis2021projection,
  title={Projection techniques to update the truncated SVD of evolving matrices with applications},
  author={Kalantzis, Vasileios and Kollias, Georgios and Ubaru, Shashanka and Nikolakopoulos, Athanasios N and Horesh, Lior and Clarkson, Kenneth},
  booktitle={International Conference on Machine Learning},
  pages={5236--5246},
  year={2021},
  organization={PMLR}
}

@book{demmel1997applied,
  title={Applied numerical linear algebra},
  author={Demmel, James W},
  year={1997},
  publisher={SIAM}
}

@book{golub2013matrix,
  title={Matrix computations},
  author={Golub, Gene H and Van Loan, Charles F},
  year={2013},
  publisher={JHU press}
}

@article{lloyd1982least,
  title={Least squares quantization in PCM},
  author={Lloyd, Stuart},
  journal={IEEE transactions on information theory},
  volume={28},
  number={2},
  pages={129--137},
  year={1982},
  publisher={IEEE}
}

@article{von2007tutorial,
  title={A tutorial on spectral clustering},
  author={Von Luxburg, Ulrike},
  journal={Statistics and computing},
  volume={17},
  pages={395--416},
  year={2007},
  publisher={Springer}
}

@article{stewart1990matrix,
  title={Matrix perturbation theory},
  author={Stewart, Gilbert W},
  year={1990},
  publisher={Citeseer}
}

@techreport{page1999pagerank,
  title={The {P}age{R}ank citation ranking: Bringing order to the web.},
  author={Page, Lawrence and Brin, Sergey and Motwani, Rajeev and Winograd, Terry},
  year={1999},
  institution={Stanford InfoLab}
}

@article{benzi2013ranking,
  title={Ranking hubs and authorities using matrix functions},
  author={Benzi, Michele and Estrada, Ernesto and Klymko, Christine},
  journal={Linear Algebra and its Applications},
  volume={438},
  number={5},
  pages={2447--2474},
  year={2013},
  publisher={Elsevier}
}

@article{evolutionary_network_analysis,
author = {Aggarwal, Charu and Subbian, Karthik},
title = {Evolutionary Network Analysis: A Survey},
year = {2014},
issue_date = {July 2014},
publisher = {Association for Computing Machinery},
address = {New York, NY, USA},
volume = {47},
number = {1},
issn = {0360-0300},
url = {https://doi.org/10.1145/2601412},
doi = {10.1145/2601412},
journal = {ACM Comput. Surv.},
month = may,
articleno = {10},
numpages = {36}
}

@ARTICLE{Bullmore2009-ag,
  title    = "Complex brain networks: graph theoretical analysis of structural
              and functional systems",
  author   = "Bullmore, Ed and Sporns, Olaf",
  journal  = "Nat Rev Neurosci",
  volume   =  10,
  number   =  3,
  pages    = "186--198",
  month    =  feb,
  year     =  2009,
  address  = "England",
  language = "en"
}

@ARTICLE{geometric_DL2017,
  author={Bronstein, Michael M. and Bruna, Joan and LeCun, Yann and Szlam, Arthur and Vandergheynst, Pierre},
  journal={IEEE Signal Processing Magazine}, 
  title={Geometric Deep Learning: Going beyond Euclidean data}, 
  year={2017},
  volume={34},
  number={4},
  pages={18-42},
  doi={10.1109/MSP.2017.2693418}}

@article{community_dynamic_networks2018,
author = {Rossetti, Giulio and Cazabet, R\'{e}my},
title = {Community Discovery in Dynamic Networks: A Survey},
year = {2018},
issue_date = {March 2019},
publisher = {Association for Computing Machinery},
address = {New York, NY, USA},
volume = {51},
number = {2},
issn = {0360-0300},
url = {https://doi.org/10.1145/3172867},
doi = {10.1145/3172867},
journal = {ACM Comput. Surv.},
month = feb,
articleno = {35},
numpages = {37}
}

\end{document}